\theoremstyle{plain}
\newtheorem{theorem}{Theorem}[section]
\newtheorem{lemma}[theorem]{Lemma}
\theoremstyle{definition}
\theoremstyle{remark}
\icmltitlerunning{Federated Learning with Positive and Unlabeled Data}
\begin{document}

\twocolumn[
\icmltitle{Federated Learning with Positive and Unlabeled Data}



\icmlsetsymbol{equal}{*}

\begin{icmlauthorlist}
\icmlauthor{Xinyang Lin}{equal,xju}
\icmlauthor{Hanting Chen}{equal,huawei}
\icmlauthor{Yixing Xu}{huawei}
\icmlauthor{Chao Xu}{pku}
\icmlauthor{Xiaolin Gui}{xju}
\icmlauthor{Yiping Deng}{huawei2}
\icmlauthor{Yunhe Wang$^{\dag}$}{huawei}
\end{icmlauthorlist}

\icmlaffiliation{huawei}{Huawei Noah’s Ark Lab}
\icmlaffiliation{huawei2}{Central Software Institution, Huawei Technologies}
\icmlaffiliation{xju}{Faculty of Electronic and Information Engineering, Xi’an Jiaotong University}
\icmlaffiliation{pku}{Key Lab of Machine Perception (MOE), Department of Machine Intelligence, Peking University, China}

\icmlcorrespondingauthor{Yunhe Wang$^{\dag}$}{yunhe.wang@huawei.com}

\icmlkeywords{Machine Learning, ICML}

\vskip 0.3in
]



\printAffiliationsAndNotice{\icmlEqualContribution} 

\begin{abstract}
	We study the problem of learning from positive and unlabeled (PU) data in the federated setting, where each client only labels a little part of their dataset due to the limitation of resources and time. Different from the settings in traditional PU learning where the negative class consists of a single class, the negative samples which cannot be identified by a client in the federated setting may come from multiple classes which are unknown to the client. Therefore, existing PU learning methods can be hardly applied in this situation. To address this problem, we propose a novel framework, namely Federated learning with Positive and Unlabeled data (FedPU), to minimize the expected risk of multiple negative classes by leveraging the labeled data in other clients. We theoretically analyze the generalization bound of the proposed FedPU. Empirical experiments show that the FedPU can achieve much better performance than conventional supervised and semi-supervised federated learning methods. Code is available at \url{https://github.com/littleSunlxy/FedPU-torch}
\end{abstract}

\section{Introduction}

With the development of edge devices (\emph{e.g.}, cameras, microphones, and GPS), more and more decentralized data are collected and locally stored by different users. Due to the privacy and transmission concerns, users are unwilling or not allowed to share the data with each other. In this case, classical machine learning scheme can hardly learn a globally effective model for all the users. Therefore, federated learning~\cite{Mcmahan2017communication} is proposed to derive a model with high performance in the central server by leveraging multiple local models trained by users (clients) themselves, which ensures the privacy of the local data.  

Typically, there is a common assumption in federated learning that the local data (private data) stored on user devices is well refined (\emph{i.e.}, all of the local data is labeled with ground truth). However, considering the limitation of time and resources, only part of the private data in each client are labeled in reality. To this end, some of the previous works were proposed to address this federated learning problem following a semi-supervised scheme. ~\cite{jeong2020federated} proposed the FedMatch algorithm which introduced a new inter-client consistency loss and decomposed the parameters for labeled and unlabeled data. ~\cite{zhang2020benchmarking} managed to solve this problem by conducting a novel grouping-based model average method and improved the convergence efficiency. ~\cite{itahara2020distillation} proposed a distillation-based algorithm to exchange the local models among each client and learned the unlabeled data by pseudo labels. Although these methods can successfully address the semi-supervised learning problem for federated learning, they assume that each class has labeled samples in each client. However, in real world applications, users from each client may only label part of categories due to their limited ability.

To address the aforementioned problem, we consider a more general setting of federated learning with unlabeled data: 1) each client only labels \emph{part of} their own data which comes from \emph{part of} the classes; 2) there are no data in the central server; 3) nothing except parameters of models can be exchanged between clients and the central server. Note that the first constraint of our setting meets the problem of learning from positive and unlabeled (PU) data. Existing PU methods~\cite{liu2003building,liu2015classification,xu2017multi} focused on solving the PU problem which regard the negative class (class that contains no labeled samples) as a single class. However, since negative class in one client may consist of multiple positive classes in other clients, there are multiple negative classes in one client in federated learning, which results in a multiple-positive-multiple-negative PU (MPMN-PU) learning problem and cannot be solved using existing PU learning framework.  

\begin{figure*}[t]
	\centering
	\includegraphics[width=0.9\linewidth]{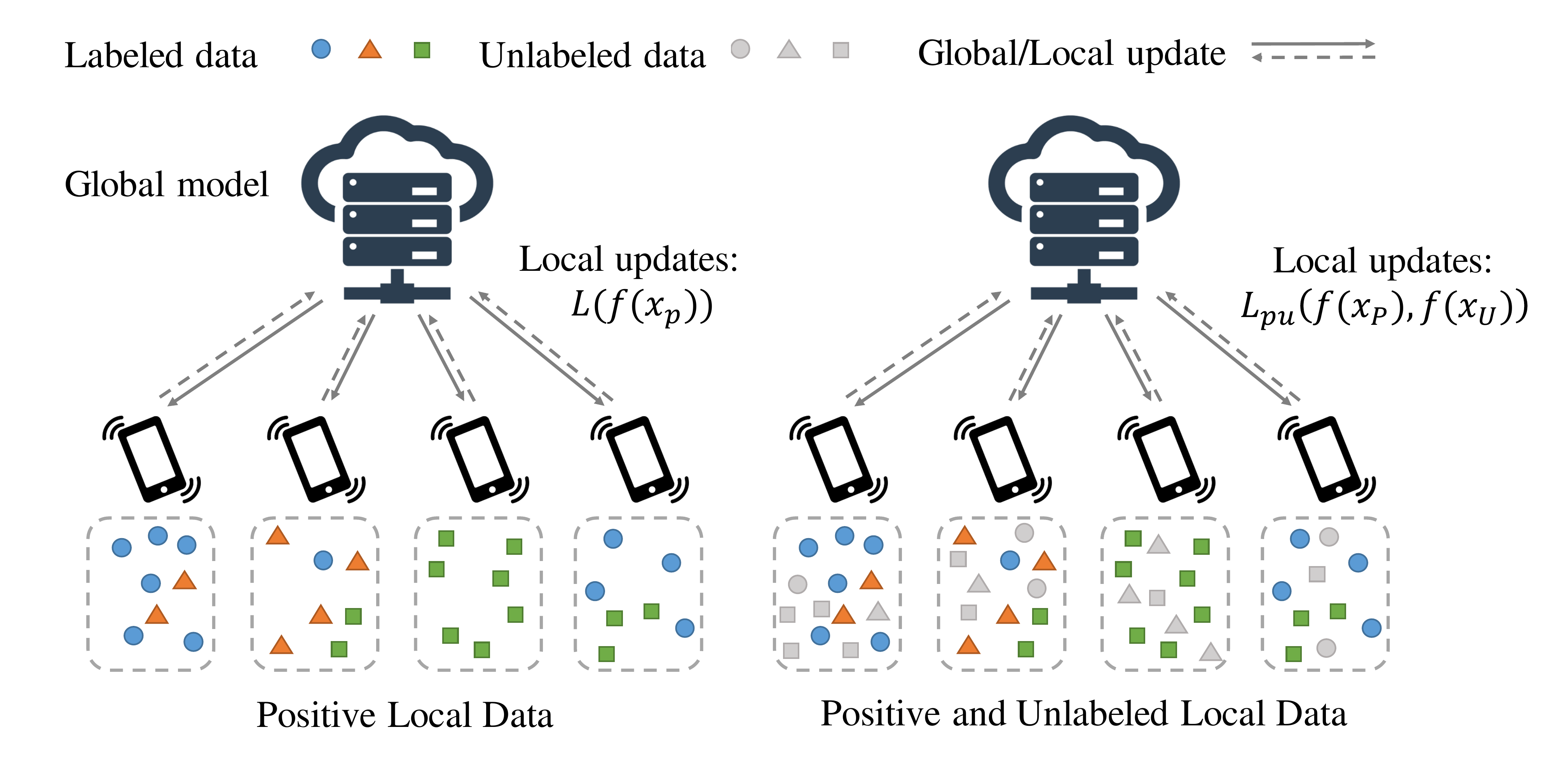}
	\caption{Illustration of the conventional federated learning (left) and the proposed method (right). Conventional federated learning method only learns from labeled data. In contrast, we propose the federated learning with positive and unlabeled data to fully inherit the information from the unlabeled data.}
	\label{fig}
\end{figure*}

In this paper, we propose the Federated learning with Positive and Unlabeled data (FedPU) algorithm, where the local model in each client is trained with MPMN-PU data. We first analyze the expected risk of each class in each client and show that the risks of multiple negative classes can be successfully minimized by leveraging unlabeled data in this client and labeled data in other clients, which is shown in Figure~\ref{fig}. Moreover, we present a generalization bound of proposed FedPU and show that the FedPU algorithm is no worse than $C\sqrt{C}$ times (where $C$ denotes the number of classes) of the fully-supervised model in federated setting. Experiments on MNIST and CIFAR datasets empirically show that the proposed method can achieve better performance than existing federated learning algorithms.

\section{Related Works}
In this section, we briefly review the related works about the federated learning and positive-unlabeled learning. 

\subsection{Federated Learning}

Federated learning is firstly proposed by~\cite{Mcmahan2017communication} in order to collaboratively learn a model without collecting data from the participants.~\cite{bonawitz2017practical} proposes the secure aggregation based on the concept of the Secure Multiparty Computation (SMC) algorithm, which aggregates private values of mutually distrustful parties without revealing information about their private values.~\cite{geyer2017differentially} introduces client-level differential privacy to prevent any client from trying to reconstruct the private data of another client by exploiting the global model in federated learning.~\cite{yang2019federated} considers the statistical challenge of the heterogeneity of data from users in practical settings that cooperation are conducted on low-quality, incomplete and insufficient data.~\cite{Mcmahan2017communication} proposes the Federated Averaging (FedAvg) algorithm, which performs aggregating algorithm by averaging model updates from participants.~\cite{ghosh2020efficient} proposes the Iterative Federated Clustering Algorithm (IFCA), which optimizes the weights for each client by estimating the cluster identities. In the statistical heterogeneity context,~\cite{acar2020federated} targets the non i.i.d client dataset problem in federated learning and aligns the loss surfaces of clients with a novel dynamic regularizer.~\cite{acar2021debiasing} analyzes different personalization methods and uses gradient correction algorithms to ensure convergence by being agnostic to heterogeneity levels. Recently, several researches~\cite{li2018federated,karimireddy2020scaffold,sattler2019robust} focus on improving model performance on non-iid data. 

\subsection{Positive and Unlabeled Learning}

Various effective algorithms have been developed to solve the PU learning problem.~\cite{liu2003building} proposes the two-step technique based on the assumption that all the positive samples are similar to the labeled examples and the negative samples are very different from them.~\cite{liu2015classification} introduces an biased PU learning methods, which treats the unlabeled samples as negative ones with label noise.~\cite{lee2003learning} regards the unlabeled data as negative data with smaller weights, then performed logistic regression after weighting the samples to handle the situation that noise rate is greater than a half. In order to avoid tuning the weights, ~\cite{elkan2008learning} regards unlabeled data as weighted positive and negative data simultaneously.~\cite{du2014analysis} proposes the unbiased risk estimator and~\cite{kiryo2017positive} makes a progress by proposing a non-negative risk estimator for PU learning to mitigate the overfitting problem when using a flexible model.~\cite{garg2021mixture} investigated methods for mixture proportion estimation and PU classification.~\cite{xu2017multi} adapted PU learning to the setting with multi-class classification problem. These methods regard the negative class as a single class, which is reasonable when there is only a single dataset. However, in federated learning, the datasets are distributed in different clients, where samples from the negative classes in one client may become positive in another client since different clients are free to label their data. To this end, an effective PU learning algorithm for the federated setting is urgently required.   

\section{Method}

In this section, we study federated learning problem under the MPMN-PU learning setting for each client.   

\subsection{Problem Setup}

Here we first introduce the notations in federated learning, where there are $K$ different clients and one central server. Given the data space $\mathcal{S}$ and the hypothesis space of parameters $\mathcal{W}$, the training data is distributed on $K$ different clients and is generated from the data space $\mathcal{S}$, which is denoted as $\{\mathbf{S}_k\}^{K}_{k=1}\in\mathcal{S}$. Denote $T$ as the number of communication rounds and $w_t\in \mathcal{W}$ as the weight matrix in the central server in time $t\in\{1,...,T\}$, the weights $w_t$ is first transferred from the central server to each client, and then updated using the training data in each client respectively and derive $K$ different weights:   
\begin{equation}
w^k_{t+1} \leftarrow\mbox{ClientUpdate}(k,w_t),
\end{equation}
where $w_{t+1}^k, k\in\{1,...,K\}$ is the updated weights from client $k$ and the client update stage is a conventional training method for updating the gradient. After that, the updated weights are then transferred back to renew the weight matrix in central server:   
\begin{equation}
w_{t+1} \leftarrow \sum_{k=1}^K \frac{n^k}{n} w^{k}_{t+1},
\end{equation}
where $n^k$ is the number of training samples in client $k$ and $n=\sum_{k=1}^{K} n^k$ is the number of all the training samples.

In the traditional federated learning setting, the training data in each client is fully labeled. Nevertheless, samples are not always fully labeled in many real world scenarios because of the time and resources limitation in each client. Specifically, the training data $\mathbf{S}_k$ in client $k$ consists of positive data $\mathbf{P}_k$ and unlabeled data $\mathbf{U}_k$, which can be formulated as:
\begin{equation}
\mathbf{S}_k = \mathbf{P}_k \cup \mathbf{U}_k, \quad k = 1,\dots,K.
\end{equation}
Given the set of classes as $\mathbf{C}=\{1,...,C\}$ in which $C$ is the total number of classes, the set of classes of positive data (i.e. the positive classes) in client $k$ is denoted as $\mathbf{C}_{\mathbf{P}_k}$, while the negative classes is denoted as $\mathbf{C}_{\mathbf{N}_k}$, where $\mathbf{C}_{\mathbf{P}_k} \bigcup \mathbf{C}_{\mathbf{N}_k}= \mathbf{C}$. In other words, each client can only identify part of the classes from the dataset $\mathbf{S}_k$. Besides, only a portion of the data in the positive classes can be labeled since the data is too much to be fully labeled. Therefore, there exists unlabeled data from not only the negative classes but also the positive classes, \emph{i.e.}, $\mathbf{C}_{\mathbf{U}_k} = \mathbf{C} = \mathbf{C}_{\mathbf{P}_k} \bigcup \mathbf{C}_{\mathbf{N}_k}$. Specifically, we have:  
\begin{equation}
\begin{aligned}
&\forall x \in \mathbf{P}_k, \textbf{Class}(x) \in \mathbf{C}_{\mathbf{P}_k};\\ &\forall x \in \mathbf{U}_k, \textbf{Class}(x) \in \mathbf{C}_{\mathbf{P}_k} \bigcup \mathbf{C}_{\mathbf{N}_k}.
\end{aligned}
\end{equation}
Note that different clients have different set of positive classes, and all of the positive classes should cover the whole classes in the dataset, \emph{i.e.}, $ \bigcup_{\mathbf{P}_k} \mathbf{C}_{\mathbf{P}_k} = \mathbf{C}$.

In this setting, the conventional federated learning algorithms cannot be directly applied. Fortunately, PU (Positive and Unlabeled) learning~\cite{liu2003building} has been proposed to solve this problem. 
However, traditional PU learning methods regard the negative class as a single class, which is inappropriate in federate learning since negative class in one client may consists of multiple positive classes in other clients. Therefore, we meet a MPMN (Multi-Positive and Multi-Negative) PU learning problem, which cannot be directly handled by existing methods. 
\begin{algorithm}[t]
	\caption{The proposed FedPU learning algorithm.}\label{alg:FedAvg}
	\begin{algorithmic}[1]
		\REQUIRE Training dataset $\mathbf{S}_k$ in each client $k$ with $n^k$ training samples, class prior $\pi_i$ for each class $i=1,...,C$, communication round $T$ and training iteration $I$ for each client. 
		
		\STATE\textbf{Server executes:}
		\STATE Initialize the network $f(\mathbf{x};w_0)$.
		\FOR{each round $t = 1, 2, \dots, T$}
		\FOR{each client $k \in \{1,2,...,K\}$ \textbf{in parallel}}
		\STATE $w_{t+1}^k \leftarrow \text{ClientUpdate}(k, w_t)$ 
		\ENDFOR
		\STATE $w_{t+1} \leftarrow \sum_{k=1}^K \frac{n^k}{n} w_{t+1}^k$
		\ENDFOR
		
		\STATE \textbf{ClientUpdate($k, w_t$):}\ \ \  // \emph{Run on client $k$}
		\FOR{each local epoch $i$ from $1$ to $I$}
		\STATE Randomly select a batch of positive and unlabeled data $\{\mbox{x}^k\}$ from the dataset $\mathbf{S}_k$;
		\STATE Calculate the first term and second term in Eq.~\ref{equ:fedpurisk} using labeled data by $f(\mathbf{x}_P;w_t)$.
		\STATE Calculate the third term in Eq.~\ref{equ:fedpurisk} using unlabeled data by $f(\mathbf{x}_U;w_t)$.
		\STATE Minimize the loss function in Eq.~\ref{equ:fedpurisk} and update the weights $w^k_{t}$ according to the gradient.
		\ENDFOR
		\STATE Return the updated weight $w^k_{t+1}$ to server.
		\ENSURE The model $f(\mathbf{x};w_T)$ trained by PU data. 
	\end{algorithmic}
\end{algorithm}
\subsection{Federated Learning with Positive and Unlabeled Data}
To address the MPMN PU learning problem, we propose our FedPU (Federated learning with Positive and Unlabeled data) method. We assume to utilize FedAvg as the federated aggregation method for simplicity.

Here we first present our MPMN PU learning scheme in a single client (or without federate setting) for convenience. Denote the training samples as $\{(\mathbf{x}_i,\mathbf{y}_i)\}_{i=1}^{n}\in \mathbf{S}$. In classical multi-class classification, given the class prior $\pi_i = p(y=i), i=1,2,...C$, the classifier $f(\mathbf{x};w)$  (short as $f(\mathbf{x})$), in which $w$ is the parameter of the classifier, can be learned by minimizing the expected misclassification rate $R(f)$:   
\begin{equation}
\label{equ:risk}
R(f) = \sum_{i=1}^C \pi_i R_i(f) = \sum_{i=1}^C \pi_i  P_i(f(\mathbf{x}) \neq i),
\end{equation}
where $\sum_{i=1}^C \pi_i =1$ and $P_i(\cdot)$ denotes the probability calculated in $i$-th class samples. Therefore, $P_i(f(\mathbf{x}) \neq i)$ denotes the expected misclassification rate on $i$-th class.

However, in MPMN PU setting, only samples in a few classes are labeled in the training set for each client. Some of classes in Eq.~\ref{equ:risk} is unlabeled and the expected risk cannot be directly calculated in each client. Therefore, it is necessary to analyze the expected risk in the negative classes using the unlabeled data. Here we first introduce $R_{U}(f)$ to denote the sum of probability that the unlabeled samples does not belong to each of the negative class:   
\begin{equation}
\scriptsize
\label{equ:unlabel}
\begin{aligned}
R_{U}(f)=& \sum_{m\in \mathbf{C}_{\mathbf{N}}} P_U(f(\mathbf{x})\neq m)\\
=&\sum_{i\in \mathbf{C}_\mathbf{P}} \sum_{m\in \mathbf{C}_{\mathbf{N}}} \pi_i P_i(f(\mathbf{x})\neq m) + \sum_{j \in \mathbf{C}_{\mathbf{N}}} \sum_{m\in \mathbf{C}_{\mathbf{N}}}\pi_j P_j(f(\mathbf{x})\neq m)\\
= &\sum_{i\in \mathbf{C}_\mathbf{P}} \sum_{m\in \mathbf{C}_{\mathbf{N}}} \pi_i P_i(f(\mathbf{x})\neq m)+ \sum_{j  \in \mathbf{C}_{\mathbf{N}}}\pi_j P_j(f(\mathbf{x})\neq j)\\
&+ \sum_{ {j,m}  \in \mathbf{C}_{\mathbf{N}}, j\neq m}\pi_j P_j(f(\mathbf{x})\neq m),
\end{aligned}
\end{equation}
where $P_U(\cdot)$ denotes the probability calculated in unlabeled samples. Since the unlabeled samples may from both positive and negative classes, the probability $P_U(\cdot)$ can be separated into $\sum_{i\in \mathbf{C}_\mathbf{P}}P_i(\cdot)$ and $\sum_{j\in \mathbf{C}_\mathbf{N}}P_j(\cdot)$. Finally, $R_{U}(f)$ can be divided into three terms, where the first term is the probability of positive data have not been classified to the set of negative classes, the second term is the probability of negative data have not been classified to the corresponding negative class, and the third term is the probability of negative data have not been classified to the other negative classes. Note that the second term is exactly the expected risk in the negative classes, Eq.~\ref{equ:risk} can be reformulated as:
\begin{equation}
\scriptsize
\label{equ:purisk}
\begin{aligned}
R(f)=& \sum_{i\in \mathbf{C}_\mathbf{P}} \pi_i R_i(f) + \sum_{j\in \mathbf{C}_{\mathbf{N}}} \pi_j R_j(f) \\
= &  \sum_{i\in \mathbf{C}_\mathbf{P}} \pi_i R_i(f) + R_{U}(f) -  \sum_{i\in \mathbf{C}_\mathbf{P}} \sum_{m\in \mathbf{C}_{\mathbf{N}}} \pi_i P_i(f(\mathbf{x})\neq m) \\
&-\sum_{ {j,m} \in \mathbf{C}_{\mathbf{N}}, j\neq m}\pi_j P_j(f(\mathbf{x})\neq m)\\
= &  \sum_{i\in \mathbf{C}_\mathbf{P}} \pi_i [P_i(f(\mathbf{x})\neq i) - \sum_{m\in \mathbf{C}_{\mathbf{N}}}P_i(f(\mathbf{x})\neq m)]  \\
&+\sum_{m\in \mathbf{C}_{\mathbf{N}}} P_U(f(\mathbf{x})\neq m) -  \sum_{ {j,m} \in \mathbf{C}_{\mathbf{N}}, j\neq m}\pi_j P_j(f(\mathbf{x})\neq m).
\end{aligned}
\end{equation}

Through calculating the $R_{U}(f)$ in unlabeled data, we can successfully obtain the expected risk in the negative classes. Now we are ready to solve the federated learning problem with MPMN-PU data. Here we turn to the federated learning setting, the expected risk can be formulated as:
\begin{equation}
\label{equ:fedrisk}
R^{all}(f) = \sum_{k=1}^K R^k(f),
\end{equation}
where $R^k(f)$ denote the expected risk in client $k$. Given Eq.~\ref{equ:purisk}, the corresponding expectation of the expected risk using in PU setting can be reformulated as:
\begin{equation}
\scriptsize
\label{equ:fedrisk1}
\begin{aligned}
\mathbb{E}[R^k(f)] =  &\sum_{i\in \mathbf{C}_{\mathbf{P}_k}} \pi_i \mathbb{E}_i^k\left[P(f(\mathbf{x})\neq i) - \sum_{m\not\in \mathbf{C}_{\mathbf{P}_k}}P(f(\mathbf{x})\neq m)\right]  \\
&+\sum_{m\not\in \mathbf{C}_{\mathbf{P}_k}}\mathbb{E}_U^k\left[ P(f(\mathbf{x})\neq m)\right]
\\&-  \sum_{ {j,m} \not\in \mathbf{C}_{\mathbf{P}_k}, j\neq m} \pi_j \mathbb{E}_j^k\left[ P(f(\mathbf{x})\neq m)\right],
\end{aligned}
\end{equation}
where $\mathbb{E}_i^k$ means the expectation for the labeled data of $i$th class in client $k$, and $\mathbb{E}_U^k$ means the expected risk for unlabeled data in client $k$. 

Note that the federated MPMN-PU learning problem has several negative classes, which is fundamentally different with conventional PU learning problem~\cite{liu2003building, xu2017multi} whose negative class is a single class. We have an additional term $\sum_{ {j,m} \not\in \mathbf{C}_{\mathbf{P}_k}, j\neq m} \pi_j \mathbb{E}_j\left[ P(f(\mathbf{x})\neq m)\right]$ in Eq.~\ref{equ:fedrisk1}. Actually, this term denotes the misclassifiation loss between the negative classes, which have not appeared in traditional PU problem since they only have a single negative class.

Considering that the negative classes are unlabeled, it is difficult to directly calculate $\sum_{ {j,m} \not\in \mathbf{C}_{\mathbf{P}_k}, j\neq m} \pi_j \mathbb{E}_j\left[ P(f(\mathbf{x})\neq m)\right]$. Fortunately, we have  $\bigcup_{\mathbf{P}_k} \mathbf{C}_{\mathbf{P}_k} = \mathbf{C}$, which means that although we have no information for the negative class in one client, there exists labeled data for these classes in other clients. Since the weights in central server is derived from the combination of each client, we can calculate this term by the labeled data in other clients. Specifically, assuming that data in the same class in different clients follows the same distribution, when updating the weights in client $k_1$, we abundant the term $\sum_{ {j,m} \mathbf{C}_{\mathbf{N}_{k_1}}, j\neq m} \pi_j \mathbb{E}^{k_1}_j\left[ P(f(\mathbf{x})\neq m)\right]$, while when updating the weights in client $k_2$, we add the term $\sum_{ {j,m} \in \mathbf{C}_{\mathbf{P}_{k_2}}, j\neq m} \pi_j \mathbb{E}^{k_1}_j\left[ P(f(\mathbf{x})\neq m)\right]$, where $\quad j\in \mathbf{C}_{\mathbf{N}_{k_1}}, j\in \mathbf{C}_{\mathbf{P}_{k_2}}$. According to the Eq.~\ref{equ:fedrisk}, since the weights in central server is derived from the
combination of each client, the overall risk $R(f)$ remains the same after applying this approximation.

By applying the above technique to Eq.~\ref{equ:fedrisk1}, we can successfully formulated the PU learning risk as:
\begin{equation}
\scriptsize
\label{equ:fedpurisk}
\begin{aligned}
\mathbb{E}[R^k(f)] =&  \sum_{i\in \mathbf{C}_{\mathbf{P}_k}} \pi_i \mathbb{E}_i^k\left[P(f(\mathbf{x})\neq i) - \sum_{m\not\in \mathbf{C}_{\mathbf{P}_k}}P(f(\mathbf{x})\neq m)\right] \\
&+\sum_{m\not\in \mathbf{C}_{\mathbf{P}_k}}\mathbb{E}_U^k\left[ P(f(\mathbf{x})\neq m)\right]\\&-  \sum_{k_q\neq k}\quad\sum_{ i\in \mathbf{C}_{\mathbf{P}_{k}},i,m\not\in \mathbf{C}_{\mathbf{P}_{k_q}},i\neq m} \pi_i\mathbb{E}_i^k   \left[ P(f(\mathbf{x})\neq m)\right] ,
\end{aligned}
\end{equation}
where the first and second terms are the risks from the current client while the second term is derived from other clients. Different with Eq.~\ref{equ:fedrisk1} that contains risk of negative classes, the above equation can be easily minimized since it only consists of the risk of positive data and unlabeled data. 
Therefore, the overall expected risk in Eq.~\ref{equ:risk} can be minimized by minimizing the above risk in each client. Algorithm~\ref{alg:FedAvg} shows the detailed procedure of the proposed FedPU method. 

\subsection{Theoretical Analysis}

In this section, we analyze the generation bound of the proposed FedPU. We first evaluate the bound in each client. Then the overall bound can be derived by summing these bounds. Note that the proof of theorems and lemma can be found in the supplementary materials.

Since the Eq.~\ref{equ:fedpurisk} has three terms, we begin with the first and second terms. 

\begin{theorem}
	\label{theorem1}
	Fix $f\in\mathcal{F}$, for any $0<\delta<1$, with probability at least $1-\delta$, the generalization bound holds:
	\begin{equation}
	\small
	\begin{aligned}
	&\mathbb{E}_i^k\left[P(f(\mathbf{x})\neq i) - \sum_{m\not\in \mathbf{C}_{\mathbf{P}_k}}P(f(\mathbf{x})\neq m)\right]\\& - \frac{1}{n_i^k}\sum_{j=1}^{n_i^k}\left[ P(f(\mathbf{x}_j)\neq i) - \sum_{m\not\in \mathbf{C}_{\mathbf{P}_k}}P(f(\mathbf{x}_j)\neq m) \right] \\
	\leq& 2CV(\sum_{s\in \mathbf{C}_{\mathbf{P}_k}} \frac{1}{\sqrt{n_s^k}} + \frac{1}{\sqrt{n_U^k}}) + \sqrt{\frac{log\frac{1}{\delta}}{2n_i^k}},
	\end{aligned}
	\end{equation}
	where $i\in \mathbf{C}_{\mathbf{P}_k}$, $V$ is a constant related to the VC-dimension of $f$ and the bound of function $f$, $n_i^k$ and $ n_U^k$ denotes the number of samples in $i$-class and unlabeled classes in $k$-th client, respectively.  
\end{theorem}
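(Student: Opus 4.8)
The plan is to read the displayed difference as the generalization gap of an empirical-risk estimator and to control it by the classical three-step recipe: bounded-difference concentration, symmetrization, and a VC-based bound on the resulting Rademacher complexity. Although the statement fixes $f\in\mathcal{F}$, the VC-dimension appearing on the right-hand side signals that the bound is actually obtained uniformly over $\mathcal{F}$ (so that it remains valid for the data-dependent classifier produced by the client update); I would therefore work throughout with $\sup_{f\in\mathcal{F}}$ of the gap, which both licenses fixing an arbitrary $f$ and explains why $V$ enters.

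First I would abbreviate the per-sample integrand as $g_f(\mathbf{x})=P(f(\mathbf{x})\neq i)-\sum_{m\notin\mathbf{C}_{\mathbf{P}_k}}P(f(\mathbf{x})\neq m)$, so that the left-hand side is $\mathbb{E}_i^k[g_f]-\frac{1}{n_i^k}\sum_{j}g_f(\mathbf{x}_j)$, accompanied by the analogous unlabeled estimate of the second risk term of Eq.~\ref{equ:fedpurisk}. Each summand is a signed combination of indicator losses and is therefore bounded, and replacing a single sample perturbs the empirical average by $O(1/n_i^k)$ (respectively $O(1/n_U^k)$). Applying McDiarmid's bounded-difference inequality to $\sup_{f}$ of the gap yields the additive fluctuation term $\sqrt{\log(1/\delta)/(2n_i^k)}$ and leaves only the expected supremum to be estimated. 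By the standard symmetrization lemma this expectation is at most twice the Rademacher complexity of the induced loss class $\{g_f:f\in\mathcal{F}\}$, evaluated on the relevant samples.

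The multi-class, multi-negative structure is then unwound by sub-additivity: $g_f$ is a signed sum of at most $C$ membership indicators $\mathbf{1}[f(\mathbf{x})\neq m]$, so its Rademacher complexity is at most $C$ times that of a single binary membership class, and Massart's lemma together with the Sauer--Shelah bound controls each of these by $O(V/\sqrt{n})$, where $V$ collects the VC-dimension of $\mathcal{F}$ and the uniform bound on $f$. Attaching each binary complexity to the sample set on which it is evaluated---the labeled pool of each positive class $s$ and the unlabeled pool---produces the factor $2CV\big(\sum_{s\in\mathbf{C}_{\mathbf{P}_k}}\tfrac{1}{\sqrt{n_s^k}}+\tfrac{1}{\sqrt{n_U^k}}\big)$, and a union bound over the concentration and complexity events assembles the claim. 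The main obstacle is exactly this bookkeeping step: one must verify that the coupling induced by $\sum_{m\notin\mathbf{C}_{\mathbf{P}_k}}$ costs only a single factor $C$ rather than a quadratic blow-up, and that the per-class complexities are charged to the correct sample counts $n_s^k$ and $n_U^k$ instead of collapsing onto $n_i^k$ alone; getting these two accounting points right is what delivers the precise constant on the right-hand side.
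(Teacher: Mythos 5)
Your proposal is correct and takes essentially the same route as the paper: the paper's proof simply cites a multiclass margin-based generalization bound from Koltchinskii (whose underlying proof is exactly your McDiarmid-plus-symmetrization step) together with a VC-dimension bound on Rademacher complexity (your Massart/Sauer--Shelah step), and then charges the complexity to the pools $n_s^k$ and $n_U^k$ with the same loose bookkeeping you describe. The only cosmetic differences are that the paper phrases the empirical risk as a $\rho$-margin loss and absorbs $4/\rho$ into the constant $V$, whereas you work with 0--1 indicator losses and attribute the factor $2$ to symmetrization rather than to the two terms of the risk difference.
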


\begin{theorem}
	\label{theorem2}
	Fix $f\in\mathcal{F}$, for any $0<\delta<1$, with probability at least $1-\delta$, the generalization bound holds:
	\begin{equation}
	\begin{aligned}
	\mathbb{E}_i^k   \left[ P(f(\mathbf{x})\neq m)\right] - \frac{1}{n_i^k}\sum_{j=1}^{n_i^k}P(f(\mathbf{x}_j)\neq m) \\\leq CV(\sum_{s\in \mathbf{C}_{\mathbf{P}_k}} \frac{1}{\sqrt{n_s^k}} + \frac{1}{\sqrt{n_U^k}}) + \sqrt{\frac{log\frac{1}{\delta}}{2n_i^k}}.
	\end{aligned}
	\end{equation}
\end{theorem}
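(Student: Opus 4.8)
The plan is to follow the same uniform-convergence argument that underlies Theorem~\ref{theorem1}, specialized to the single loss term $g_m(\mathbf{x}) := P(f(\mathbf{x})\neq m)$. Define the class-$i$ generalization gap
\[
\Phi := \mathbb{E}_i^k\left[g_m(\mathbf{x})\right] - \frac{1}{n_i^k}\sum_{j=1}^{n_i^k} g_m(\mathbf{x}_j).
\]
Since $f$ has bounded range and $g_m\in[0,1]$, this is a bounded function of the $n_i^k$ i.i.d.\ class-$i$ samples. Although the statement fixes $f\in\mathcal{F}$, the VC term in the bound indicates that one actually dominates the fixed-$f$ gap by $\sup_{f\in\mathcal{F}}\Phi$, so the learned (hence data-dependent) predictor is handled uniformly over the hypothesis class; I would make this passage explicit at the outset.

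First I would control the deviation of the empirical average from its mean via McDiarmid's bounded-difference inequality: replacing one of the $n_i^k$ samples changes $\frac{1}{n_i^k}\sum_j g_m(\mathbf{x}_j)$ by at most $1/n_i^k$, so $\sum_j c_j^2 = 1/n_i^k$ and, with probability at least $1-\delta$,
\[
\Phi \leq \mathbb{E}\!\left[\sup_{f\in\mathcal{F}}\Phi\right] + \sqrt{\frac{\log(1/\delta)}{2\,n_i^k}},
\]
which already produces the additive term of the claimed bound. Next I would bound the expected supremum by the Rademacher complexity of the induced loss class through the standard symmetrization step, $\mathbb{E}[\sup_f \Phi]\leq 2\,\mathfrak{R}_n(\{g_m\circ f: f\in\mathcal{F}\})$, apply a Talagrand contraction argument to peel the loss off $f$ (using that $P(f(\cdot)\neq m)$ is a Lipschitz function of the network outputs), and convert the resulting Rademacher complexity of $\mathcal{F}$ into a VC-dimension-and-range-based bound of order $V/\sqrt{n}$ per data subset. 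Aggregating the complexity contributions over the positive classes $s\in\mathbf{C}_{\mathbf{P}_k}$ and the unlabeled pool yields $CV\big(\sum_{s\in\mathbf{C}_{\mathbf{P}_k}} 1/\sqrt{n_s^k} + 1/\sqrt{n_U^k}\big)$. Crucially, because $g_m$ here is a \emph{single} loss term rather than the difference of a term and a sum as in Theorem~\ref{theorem1}, the leading constant is $CV$ instead of $2CV$; combining the two displays then gives the theorem.

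The main obstacle I expect is the middle step: justifying why the complexity term aggregates over \emph{all} of the client's data subsets ($\sum_s 1/\sqrt{n_s^k} + 1/\sqrt{n_U^k}$) rather than over the class-$i$ samples alone, and pinning down $V$ as a clean function of the VC dimension and the sup-norm bound of $f$. This requires tracking how the multi-class predictor is compared across the $C$ classes (the source of the factor $C$), verifying that the contraction/composition inequality applies to $P(f(\cdot)\neq m)$, and invoking the VC-to-Rademacher conversion ($\mathfrak{R}_n\lesssim\sqrt{\mathrm{VCdim}/n}$, via Dudley/chaining) consistently with the proof of Theorem~\ref{theorem1}. Once that accounting is fixed, Theorem~\ref{theorem2} follows as a direct and slightly simpler instance of the Theorem~\ref{theorem1} computation, with the halved constant reflecting its single-term structure.
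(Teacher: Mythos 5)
Your proposal is correct and follows essentially the same route as the paper: the paper's proof of Theorem~\ref{theorem2} is literally stated as ``the same as that of Theorem~\ref{theorem1},'' which in turn just cites the Rademacher-complexity generalization bound (Koltchinskii) and the VC-dimension bound on Rademacher complexity (Bousquet) — exactly the argument you reconstruct by hand via McDiarmid, symmetrization, contraction, and the VC-to-Rademacher conversion. The aggregation issue you flag as the main obstacle is resolved trivially (and loosely) in the paper: since every term is nonnegative, the class-$i$ complexity of order $V'\sqrt{d/n_i^k}$ is dominated by $V'\sqrt{d}\left(\sum_{s\in \mathbf{C}_{\mathbf{P}_k}} 1/\sqrt{n_s^k} + 1/\sqrt{n_U^k}\right)$, so the stated bound holds a fortiori.
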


\begin{table*}[t]
	\begin{center}
		\caption{Classification result on iid data.}
		\label{table:iid}
		\begin{tabular}{|c|c|c|c|c|c|c|}
			\hline
			\textbf{Num of Clients} &\textbf{Num of P-class}&\textbf{Overlap} &\textbf{Baseline-1}&\textbf{Proposed Method}&\textbf{Baseline-2}  \\
			\hline
			\hline
			10 &2&\Checkmark & 85.47\% & 92.50\% & 97.95\% \\
			\hline
			4 & 6&\Checkmark & 92.10\% & 95.08\% & 98.05\% \\
			\hline
			2 & 9&\Checkmark &93.15\% & 95.37\% & 98.20\% \\
			\hline
			10 & 1&\XSolidBrush&37.13\% & 84.15\% & 97.95\% \\
			\hline
			
			5 & 2&\XSolidBrush&73.41\% & 93.45\% & 98.03\% \\
			\hline
			
			2 & 5&\XSolidBrush&74.00\% & 93.73\% & 98.20\% \\
			\hline
		\end{tabular}
	\end{center}

\end{table*}

Theorem~\ref{theorem1} and~\ref{theorem2} presents the classical generalization bound for the labeled data in each class, which can be summarized to get the error bound for the first two terms in Eq.~\ref{equ:fedpurisk}. These bounded is related to the number of training samples and the VC dimension of the function $f$.

However, it is difficult to derive the error bound of the last term in Eq.~\ref{equ:fedpurisk} since the expectation is calculated on unlabeled data, so we decompose this term using the following lemma.

\begin{lemma}
	\label{lemma3}
	Define 
	\begin{equation}
	P'(f(\mathbf{x})\neq m) = \frac{k^{C_{U_k}}}{k^{C_{U_k}}+\prod_{i\not\in \mathbf{C}_{\mathbf{P}_k}}\vert k-i\vert} P(f(\mathbf{x})\neq m),
	\end{equation}
	where $C_{U_k}$ denotes the number of unlabeled classes in client $k$. The last term in Eq.~\ref{equ:fedpurisk} can be decomposed as:
	\begin{equation}
	\small
	\label{aaa}
	\begin{aligned}
	&\sum_{m\not\in \mathbf{C}_{\mathbf{P}_k}} \mathbb{E}_U^k\left[ P(f(\mathbf{x})\neq m)\right] 
	\\=&\sum_{i\in\mathbf{C}_{\mathbf{P}_k}} \pi_i (\frac{\prod_{i\not\in \mathbf{C}_{\mathbf{P}_k}}\vert k-i\vert}{k^{C_{U_k}}}) \sum_{m\not\in \mathbf{C}_{\mathbf{P}_k}} \mathbb{E}_i^k\left[ P'(f(\mathbf{x})\neq m)\right]\\
	&+ \sum_{m\not\in \mathbf{C}_{\mathbf{P}_k}} \mathbb{E}_U^k\left[ P'(f(\mathbf{x})\neq m)\right].
	\end{aligned}
	\end{equation}
\end{lemma}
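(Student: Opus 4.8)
The plan is to derive the identity in two moves: an exact rescaling that follows from the definition of $P'$, and a mixture substitution in the resulting residual term. Write $A:=k^{C_{U_k}}$ and $B:=\prod_{i\not\in\mathbf{C}_{\mathbf{P}_k}}|k-i|$, so that the definition of $P'$ reads $P'(f(\mathbf{x})\neq m)=\frac{A}{A+B}P(f(\mathbf{x})\neq m)$. Since $\frac{A}{A+B}$ is a constant depending only on $k$ and on the negative label set (not on $\mathbf{x}$), it commutes with every expectation, and inverting it gives the pointwise identity $P(f(\mathbf{x})\neq m)=P'(f(\mathbf{x})\neq m)+\frac{B}{A}P'(f(\mathbf{x})\neq m)$.

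First I would apply this inversion inside the unlabeled expectation and split the left-hand side exactly as $\sum_{m\not\in\mathbf{C}_{\mathbf{P}_k}}\mathbb{E}_U^k[P(f(\mathbf{x})\neq m)]=\sum_{m\not\in\mathbf{C}_{\mathbf{P}_k}}\mathbb{E}_U^k[P'(f(\mathbf{x})\neq m)]+\frac{B}{A}\sum_{m\not\in\mathbf{C}_{\mathbf{P}_k}}\mathbb{E}_U^k[P'(f(\mathbf{x})\neq m)]$. The first summand is already the second term of the claim, so the work is to turn the residual $\frac{B}{A}\sum_{m}\mathbb{E}_U^k[P'(\cdot)]$ into the first term of the claim. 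For this I would invoke the mixture structure of the unlabeled pool used in Eq.~\ref{equ:unlabel}: an unlabeled sample is drawn from the positive classes $\mathbf{C}_{\mathbf{P}_k}$ with weights $\pi_i$ or from the negative classes, so $\mathbb{E}_U^k$ decomposes into a positive-class part $\sum_{i\in\mathbf{C}_{\mathbf{P}_k}}\pi_i\mathbb{E}_i^k$ and a negative-class part. Replacing the unlabeled expectation in the residual by its positive-class component produces exactly $\sum_{i\in\mathbf{C}_{\mathbf{P}_k}}\pi_i\frac{B}{A}\sum_{m\not\in\mathbf{C}_{\mathbf{P}_k}}\mathbb{E}_i^k[P'(f(\mathbf{x})\neq m)]$, the first term of the claim, and collecting the two pieces yields the stated decomposition.

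The hard part will be justifying the substitution in the residual, namely replacing $\mathbb{E}_U^k[P'(\cdot)]$ by its positive-class mixture $\sum_{i\in\mathbf{C}_{\mathbf{P}_k}}\pi_i\mathbb{E}_i^k[P'(\cdot)]$ and dropping the negative-class contribution there. This is legitimate only because of the construction already used in FedPU: data of a given class is assumed identically distributed across clients, and the negative-class mass of one client is carried by the labeled positive data of the other clients and aggregated at the server, so the discarded piece is accounted for globally rather than lost. I would therefore present the substitution exactly as the paper's earlier ``the overall risk remains the same after applying this approximation'' argument, and then check the coefficient bookkeeping: the factor $\frac{A}{A+B}$ baked into $P'$ together with the weight $\frac{B}{A}$ on the positive-class term are chosen precisely so that the two right-hand terms recombine to the original left-hand side. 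Verifying that this weight balances exactly, and controlling the error of the mixture substitution, is the step I expect to demand the most care.
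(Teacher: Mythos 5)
Your proposal follows essentially the same route as the paper's proof: the paper likewise inverts the definition of $P'$ to write $P(f(\mathbf{x})\neq m)=\frac{A+B}{A}P'(f(\mathbf{x})\neq m)$ (in your $A,B$ notation) inside $\mathbb{E}_U^k$, expands the unlabeled expectation as an integral against the class mixture $\sum_j p(\mathbf{x},y=j)$, and splits off a $\frac{B}{A}$ residual restricted to the positive classes, which recombines into the two stated terms. The step you flag as the hard part --- dropping the negative-class contribution $\frac{B}{A}\sum_{j\not\in \mathbf{C}_{\mathbf{P}_k}}\pi_j\,\mathbb{E}_j^k\left[P'(f(\mathbf{x})\neq m)\right]$ from the residual --- is precisely the step the paper's own proof performs silently, writing the restriction to $j\in\mathbf{C}_{\mathbf{P}_k}$ as though it were an equality, so your explicit identification of where the justification must come from (the cross-client bookkeeping argument) is, if anything, more careful than the source.
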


Here we briefly explain the decomposition of the above lemma. We perform a transformation in the risk of unlabeled data, which introducing the term of labeled data in Eq.~\ref{aaa}. Therefore, based on Lemma~\ref{lemma3}, we can present the generalization bound for the unlabeled data utilizing the labeled data with the following theorem.

\begin{theorem}
	\label{theorem4}
	Fix $f\in\mathcal{F}$, for any $0<\delta<1$, with probability at least $1-\delta$, the generalization bound holds:
	\begin{equation}
	\scriptsize
	\begin{aligned}
	&\sum_{m\not\in \mathbf{C}_{\mathbf{P}_k}} \mathbb{E}_U^k\left[ P(f(\mathbf{x})\neq m)\right]- \frac{1}{n_U^k}\sum_{j=1}^{n^k} \sum_{m\not\in \mathbf{C}_{\mathbf{P}_k}} \mathbb{E}_j^k\left[ P'(f(\mathbf{x})\neq m)\right]\\
	\leq& \sum_{i\in\mathbf{C}_{\mathbf{P}_k}} \frac{\pi_i}{n^k_i} (1+\frac{\prod_{i\not\in \mathbf{C}_{\mathbf{P}_k}}\vert k-i\vert}{k^{C_{U_k}}}) \sum_{j=1}^{n^k_i}\sum_{m\not\in \mathbf{C}_{\mathbf{P}_k}} \mathbb{E}_j^k\left[ P'(f(\mathbf{x})\neq m)\right]\\
	&+(\sum_{i\in\mathbf{C}_{\mathbf{P}_k}} \pi_i+1) C V (\sum_{s\in \mathbf{C}_{\mathbf{P}_k}} \frac{1}{\sqrt{n_s^k}}\\&+ \frac{1}{\sqrt{n_U^k}})  + \sum_{i\in\mathbf{C}_{\mathbf{P}_k}}\pi_i (1+\frac{\prod_{i\not\in \mathbf{C}_{\mathbf{P}_k}}\vert k-i\vert}{k^{C_{U_k}}})\sqrt{\frac{log\frac{1}{\delta}}{2n_i^k}} + \sqrt{\frac{log\frac{1}{\delta}}{2n_U^k}}.
	\end{aligned}
	\end{equation}
\end{theorem}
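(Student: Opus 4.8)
The plan is to build directly on the decomposition in Lemma~\ref{lemma3}. That lemma rewrites the expected unlabeled risk $\sum_{m\not\in\mathbf{C}_{\mathbf{P}_k}}\mathbb{E}_U^k[P(f(\mathbf{x})\neq m)]$ as the sum of two pieces: a term $A:=\sum_{i\in\mathbf{C}_{\mathbf{P}_k}}\pi_i\alpha_k\sum_{m\not\in\mathbf{C}_{\mathbf{P}_k}}\mathbb{E}_i^k[P'(f(\mathbf{x})\neq m)]$ that is an expectation over the \emph{labeled} data of the reweighted probability $P'$ carrying the factor $\alpha_k:=\prod_{i\not\in\mathbf{C}_{\mathbf{P}_k}}|k-i|/k^{C_{U_k}}$, and a term $B:=\sum_{m\not\in\mathbf{C}_{\mathbf{P}_k}}\mathbb{E}_U^k[P'(f(\mathbf{x})\neq m)]$ that is the expectation of $P'$ over the unlabeled data. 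Since the quantity to be controlled subtracts the empirical average of $P'$, substituting Lemma~\ref{lemma3} splits the left-hand side as $A+(B-\widehat{B})$, where $\widehat{B}$ denotes that empirical average. The strategy is then to bound $A$ by its empirical counterpart and to bound the fluctuation $B-\widehat{B}$, invoking for each expectation the same generalization template already established in Theorems~\ref{theorem1} and~\ref{theorem2}.

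First I would handle $A$. Each labeled expectation $\mathbb{E}_i^k[P'(f(\mathbf{x})\neq m)]$ is replaced by its empirical mean over the $n_i^k$ samples of class $i$, and the deviation is controlled exactly as in Theorem~\ref{theorem2}: a uniform-convergence step over the class-indicator functions induced by the fixed $f$ across the $C$ classes (via a symmetrization/Rademacher argument whose complexity is absorbed into the constant $V$) produces the $CV(\sum_{s\in\mathbf{C}_{\mathbf{P}_k}}1/\sqrt{n_s^k}+1/\sqrt{n_U^k})$ contribution, while Hoeffding's inequality on the $n_i^k$ labeled samples produces a $\sqrt{\log(1/\delta)/(2n_i^k)}$ term for each class. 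Weighting these by $\pi_i$ and the factor from $\alpha_k$ and summing over $i\in\mathbf{C}_{\mathbf{P}_k}$ yields the $\sum_i\pi_i$ part of the $CV$ coefficient and the $\sum_i\pi_i(1+\alpha_k)\sqrt{\log(1/\delta)/(2n_i^k)}$ term, and leaves the empirical version of $A$ in the bound since it has no counterpart on the left.

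Next I would bound the unlabeled fluctuation $B-\widehat{B}=\sum_{m\not\in\mathbf{C}_{\mathbf{P}_k}}(\mathbb{E}_U^k[P'(\cdot)]-\widehat{\mathbb{E}}_U^k[P'(\cdot)])$. Because $P'$ is a $[0,1]$-valued reweighting of the classification probability (the prefactor in Lemma~\ref{lemma3} is at most $1$), the same uniform-convergence argument applied to the unlabeled distribution supplies the remaining ``$+1$'' in the factor $(\sum_i\pi_i+1)$ multiplying the $CV$ term, and Hoeffding on the $n_U^k$ unlabeled samples gives the final $\sqrt{\log(1/\delta)/(2n_U^k)}$ term. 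Since the unlabeled pool is a mixture of the positive-class distributions and the inaccessible negative ones, relating $\widehat{B}$ back to quantities that can actually be evaluated forces the additional unit coefficient, which combines with $\widehat{A}$ to produce the $(1+\alpha_k)$ weight on the first right-hand-side term. Collecting the empirical part together with the two sets of error terms and combining the several high-probability events by a union bound (with the confidence budget split across the concentration steps) gives the stated inequality.

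The step I expect to be the main obstacle is the bookkeeping around the reweighting $P'$ and its coefficient $\alpha_k$: one must verify that $P'$ stays in $[0,1]$ so that both Hoeffding and the Rademacher bound apply with their stated constants, and one must carry $\alpha_k$ and $(1+\alpha_k)$ cleanly through the labeled and the unlabeled estimation steps so that each factor lands on the correct term. A secondary subtlety is the probability accounting: the bound contains several separate $\sqrt{\log(1/\delta)}$ terms coming from distinct concentration inequalities, so to assert that the whole event holds with probability at least $1-\delta$ one should allocate the confidence parameter across the per-class and unlabeled inequalities (strictly producing $\log(\cdot/\delta)$ factors), and I would make this allocation explicit.
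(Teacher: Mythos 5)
Your proposal follows essentially the same route as the paper: the paper's entire proof of Theorem~\ref{theorem4} is the single remark that, with Lemma~\ref{lemma3} in hand, the argument is identical to that of Theorem~\ref{theorem1}, i.e.\ apply the Rademacher/VC generalization template and Hoeffding-type concentration to each expectation appearing in the decomposition and collect terms. Your additional care about the $P'$ reweighting staying in $[0,1]$ and about splitting the confidence budget across the several concentration events is sound, and is in fact more explicit than the paper's own one-sentence proof.
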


Now we are ready to present the generalization bound for Eq.~\ref{equ:fedpurisk}.

\begin{theorem}
	\label{theorem5}
	As $n_i^k,n_U^k \rightarrow \infty$, $i \in\mathbf{C}_{\mathbf{P}_k} ,k\in \{1,...,K\}$, the generalization bound of the proposed FedPU is of order:
	\begin{equation}
	\mathcal{O}\left(\sum_{k=1}^KC^2(\sum_{i\in\mathbf{C}_{\mathbf{P}_k}}\frac{1}{\sqrt{n_i^k}}+ \frac{1}{\sqrt{n_U^k}}) \right).
	\end{equation}
\end{theorem}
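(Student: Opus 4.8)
The plan is to bound the total generalization gap $\big| \sum_{k=1}^K (\mathbb{E}[R^k(f)] - \widehat{R}^k(f)) \big|$ one client at a time, where $\widehat{R}^k(f)$ denotes the empirical counterpart of the FedPU risk in Eq.~\ref{equ:fedpurisk}. Because that risk is a sum of three groups of terms, the per-client gap splits into three pieces that I would control with the three results already proved: the first group (the labeled self-classification term) directly by Theorem~\ref{theorem1} applied to each positive class $i\in\mathbf{C}_{\mathbf{P}_k}$; the third group (the cross-client misclassification-between-negatives term) by summing Theorem~\ref{theorem2} over the relevant target classes $m\notin\mathbf{C}_{\mathbf{P}_k}$ and over the auxiliary clients $k_q\neq k$; and the second group (the unlabeled term) by first rewriting it through the decomposition of Lemma~\ref{lemma3} and then invoking Theorem~\ref{theorem4}.

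The second step is to track the powers of $C$. After weighting by the priors and using $\sum_{i\in\mathbf{C}_{\mathbf{P}_k}}\pi_i\le 1$, the first and second groups each contribute a single factor $C$ (the $CV$ prefactor of Theorems~\ref{theorem1} and~\ref{theorem4}) multiplying $\big(\sum_{s\in\mathbf{C}_{\mathbf{P}_k}}\tfrac{1}{\sqrt{n_s^k}}+\tfrac{1}{\sqrt{n_U^k}}\big)$. The dominant piece is the third group: there the extra summation over the negative targets $m\notin\mathbf{C}_{\mathbf{P}_k}$ supplies a second factor $C$, so that together with the $CV$ prefactor of Theorem~\ref{theorem2} this group is of order $C^2\big(\sum_{s\in\mathbf{C}_{\mathbf{P}_k}}\tfrac{1}{\sqrt{n_s^k}}+\tfrac{1}{\sqrt{n_U^k}}\big)$; here I would use the invariance of the total risk under the cross-client reallocation noted after Eq.~\ref{equ:fedpurisk} to keep the count at $O(C^2)$ per originating client rather than letting the $k_q$-sum inflate it. Every stray additive term $\sqrt{\log(1/\delta)/(2n_i^k)}$ or $\sqrt{\log(1/\delta)/(2n_U^k)}$ is $O(1/\sqrt{n_i^k})$ or $O(1/\sqrt{n_U^k})$ and is absorbed into the same bracket as $n_i^k,n_U^k\to\infty$. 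Keeping the dominant $C^2$ piece and summing over $k$ then gives the claimed order $\mathcal{O}\big(\sum_{k=1}^K C^2(\sum_{i\in\mathbf{C}_{\mathbf{P}_k}}\tfrac{1}{\sqrt{n_i^k}}+\tfrac{1}{\sqrt{n_U^k}})\big)$.

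The main obstacle is the unlabeled group. Its empirical estimate cannot be formed directly, which is precisely why Lemma~\ref{lemma3} re-expresses it as a labeled-data part scaled by $\prod_{i\notin\mathbf{C}_{\mathbf{P}_k}}|k-i|/k^{C_{U_k}}$ plus a shrunken residual built from $P'$. The delicate point is checking that the correction term in the bound of Theorem~\ref{theorem4} --- the one assembled from empirical averages of $P'$ rather than from $1/\sqrt{n}$ fluctuations --- does not spoil the order: as $n_i^k\to\infty$ these averages converge, and I would argue their residual contribution is dominated by the $C^2$-scaled statistical-fluctuation terms, so only the $\mathcal{O}(1/\sqrt{n})$ pieces survive. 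A secondary bookkeeping point is that Theorems~\ref{theorem1},~\ref{theorem2}, and~\ref{theorem4} each hold on an event of probability $1-\delta$; to make the combined estimate hold simultaneously I would apply a union bound over the finitely many class-and-client events, which only rescales $\delta$ inside the logarithms and therefore leaves the stated polynomial order in $C$ and $1/\sqrt{n}$ unchanged.
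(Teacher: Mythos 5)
Your proposal follows essentially the same route as the paper: the paper's own proof simply combines Theorems~\ref{theorem1}, \ref{theorem2}, and~\ref{theorem4} (with Lemma~\ref{lemma3}) to obtain the per-client bound $\mathcal{O}\bigl(C^2(\sum_{i\in\mathbf{C}_{\mathbf{P}_k}}\tfrac{1}{\sqrt{n_i^k}}+\tfrac{1}{\sqrt{n_U^k}})\bigr)$ and then sums over the $K$ clients. Your write-up is in fact more careful than the paper's one-line argument --- explicitly tracking where the factor $C^2$ arises, absorbing the $\sqrt{\log(1/\delta)/n}$ terms, and taking a union bound over the finitely many class-and-client events --- but these are elaborations of the same decomposition, not a different proof.
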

It should be noted that for fully labeled data, the generalization bound using federated learning should be of order $\mathcal{O}\left(\sum_{k=1}^K(\frac{C^2}{\sqrt{\sum_{i\in\mathbf{C}_{\mathbf{P}_k}}n_i^k+ n_U^k}}) \right)$. As a result, the proposed method is no worse than $C\sqrt{C}$ times (assuming that each class has the same order of samples) of the fully-supervised models. Moreover, for the classical learning with fully labeled data (without federated learning), the generalization bound would be of order $\mathcal{O}\left(\frac{C^2}{\sqrt{\sum_{k=1}^K(\sum_{i\in\mathbf{C}_{\mathbf{P}_k}}n_i^k+ n_U^k})} \right)$. Therefore, the proposed method is no worse than $CK\sqrt{CK}$ times of the fully-supervised models without federated learning.

\section{Experiments}

In this section, we show the experimental results of the proposed method in both iid data and non-iid data on the MNIST and CIFAR-10 dataset. We also conduct ablation study to verify the effectiveness of the proposed method in different settings. 

We first detail the training strategy used in the following experiments. The SGD optimizer is used to train the network with momentum 0.5. For federated learning, we set the communication round as 200. For each client, the local epoch and local batchsize for training the network in each round is set as 1 and 100. The learning rate is initialized as 0.01 and exponentially decayed by 0.995 over communication rounds on the MNIST dataset. To show the effectiveness of the proposed method, we compare the proposed method with two different baselines. \textbf{Baseline-1} denotes that the network is trained using only positive data and FedAvg~\cite{Mcmahan2017communication}. \textbf{Baseline-2} denotes that the network is trained using fully-supervised data and FedAvg. \textbf{Baseline-3} denotes that the network is trained using conventional PU learning and FedAvg. Note that we also conduct experiments on FedSGD~\cite{Mcmahan2017communication} and FedProx~\cite{li2020federated}, which can be found in the supplementary materials.

\begin{table*}[t]
	\begin{center}
		\caption{Classification results with different number of positive classes in each client.}
		\label{table:dif}
		\begin{tabular}{|c|c|c|c|c|}
			\hline
			\textbf{Division of P-class} &\textbf{Overlap} &\textbf{Baseline-1}&\textbf{Proposed Method}&\textbf{Baseline-2}  \\
			\hline
			\hline
			[2,3,4,6,7,8] &\Checkmark& 93.84\% & 95.32\% & 97.91\% \\
			\hline
			[1,2,4,6,7] & \Checkmark&93.81\% & 95.01\% & 98.03\% \\
			\hline
			[2,4,6,8] &\Checkmark&  92.27\% & 95.28\% & 98.05\% \\
			\hline
			[3,7]&\XSolidBrush & 89.68\% & 94.68\% & 98.20\% \\
			\hline
			[2,3,5]&\XSolidBrush& 71.46\% & 93.65\% & 98.16\% \\
			\hline
			[1,2,3,4]&\XSolidBrush& 74.27\% & 94.48\% & 98.05\% \\
			\hline
		\end{tabular}
	\end{center}
\vspace{-0.5em}
\end{table*}

\subsection{Performance on iid Data with Balanced Positive Classes}

We evaluate our method in iid setting of federated learning, where the training data in each client is uniformly sampled from the original dataset. Specifically, we uniformly divide the training set into $K$ parts, where each part of data is class-imbalanced. Since the ability of each client is limited, only a few classes can be labeled. Moreover, only part of data in these classes is labeled. To fully investigate the ability of the proposed method, we conduct different settings as shown in Table~\ref{table:iid}, including using different number of clients ($\{2,4,5,10\}$) and different number of positive classes ($\{1,2,5,6,9\}$). We also investigate the influence of overlap of positive classes between different clients. Only half of data in each positive class is labeled.

\begin{table*}[t]
	\begin{center}
		\caption{Classification result on non-iid data.}
		\label{table:noniid}
		\begin{tabular}{|c|c|c|c|c|c|}
			\hline
			\textbf{Num of Partitions} &\textbf{Division of P-class}&\textbf{Overlap} &\textbf{Baseline-1}&\textbf{Proposed Method}&\textbf{Baseline-2}  \\
			\hline
			\hline
			5 &[2,2,...,2]&\Checkmark & 25.47\% & 91.67\% & 97.47\% \\
			\hline
			5 & [1,1,...,1]&\XSolidBrush &24.37\% & 91.24\% & 97.47\% \\
			\hline
			5& [4,4,3,3,2,2,1,1,1,1]&\Checkmark &76.92\% & 92.16\% & 97.47\% \\
			\hline
			2& [1,1,...,1] &\XSolidBrush & 69.24\% & 91.29\% & 96.19\% \\
			\hline
		\end{tabular}
	\end{center}

\end{table*}

We conduct experiments on the MNIST dataset, which is composed of images with $28\times28$ pixels from 10 categories. The MNIST dataset consists of 60,000 training images and 10,000 testing images. The results are shown in Table~\ref{table:iid}. We first investigate the setting that each client has overlap in positive classes and the number of clients varies from 2 to 10. 

The Baseline-1 trained with positive data can only achieve 85.47\%, 92.10\% and 93.15\% accuracies for 10, 4 and 2 clients, respectively. It can be seen that as the number of clients increases, the data is more discrete, which makes the accuracies of learned networks lower. Although the Baseline-2 can achieve higher performance (97.95\%, 98.05\% and 98.20\%), the networks should be trained with fully supervised data, which is usually unavailable in real-world applications. In contrast, the proposed method can achieve 92.50\%, 95.08\% and 95.37\% accuracies, respectively, which is consistently higher than those of the Baseline-1 and comparable to Baseline-2. 

We further investigate the non-overlap setting, where positive classes in each client are not overlaped. This setting is challenging, since the information of every class is contained in only one client. As a result, the Baseline-1 trained with positive data achieves only 37.13\%, 73.41\% and 74.00\% accuracies for 10, 5 and 2 clients, respectively. The proposed FedPU can still achieve 84.15\%, 93.45\% and 93.73\% accuracies by fully inheriting the information from the unlabeled data. These experiments show that the proposed method can perform well with iid data in federated setting. 

\begin{table*}[t]
	\begin{center}
		\small
		\caption{Classification results on CIFAR-10 dataset}
		\label{table:difcifar}
		\begin{tabular}{|c|c|c|c|c|c|c|}
			\hline
			\textbf{Data Distribution} &\textbf{Division of P-class} &\textbf{Overlap} &\textbf{Baseline-1}&\textbf{Proposed Method}&\textbf{Baseline-2} &\textbf{Baseline-3} \\
			\hline
			\hline
			iid & [2,2,2,2,2]& \XSolidBrush& 65.52\%& 76.81\%& 81.13\%&74.15\%\\
			\hline
			iid & [1,2,4,6,7]&\checkmark& 71.42\%&75.41 \%& 81.13\%&-\\
			\hline
			non-iid & [2,2,2,2,2,2,2,2,2,2] &\checkmark& 52.39\%& 61.05\%& 72.61\%&58.77\%\\
			\hline
			non-iid & [4,4,3,3,2,2,1,1,1,1]&\checkmark& 55.57\%& 65.73\%& 72.61\%&-\\
			\hline
		\end{tabular}
	\end{center}
\vspace{-0.5em}
\end{table*}

\begin{table*}[t]
	\begin{center}
		\caption{Comparison with semi-supervised methods on CIFAR-10 dataset}
		\label{table:comp}
		\begin{tabular}{|c|c|c|c|c|c|}
			\hline
			\textbf{Methods} &\textbf{Supervised FedAVG} &\textbf{UDA} &\textbf{FixMatch}&\textbf{FedMatch}&\textbf{Ours}  \\
			\hline
			\textbf{IID Acc.} & 80.25\%&  47.45\%  & 47.20\%&  52.13\%& 58.25\%\\
			\hline
			\textbf{Non-IID Acc.} & 84.70\%&   46.31\%  &  46.20\%&  52.25\%& 55.20\%\\
			\hline
		\end{tabular}
	\end{center}
\end{table*}

\subsection{Performance on iid Data with Imbalanced Positive Classes}
To further investigate the effectiveness of the proposed method, we study a more complicated setting that the number of positive classes is different in each client. The results are in shown in Table~\ref{table:dif}. For example, the division of P-class is $[2,3,4,6,7,8]$ means there are 6 clients consists of 2, 3, 4, 6, 7 and 8 positive classes, respectively. We also study both the overlap and non-overlap settings. 

The Baseline-1 achieves 93.84\%, 93.81\% and 92.27\% accuracies for different divisions of positive classes in the overlap setting, while the proposed method can achieve 95.32\%, 95.01\% and 92.28\% accuracies, respectively, which is much higher than Baseline-1. The results in non-overlap setting is worse than those in overlap setting, which is consistent with the results in Table~\ref{table:iid} where the number of positive classes is the same in each client. The Baseline-1 achieves only 89.68\%, 71.46\% and 74.27\% accuracies. When the number of clients grows, the performance of Baseline-1 drops dramatically. In contrast, the proposed method can achieve 94.68\%, 93.65\% and 94.48\% accuracies, which surpasses those of the Baseline-1 and is stable with different numbers of clients. Note that although the Baseline-2 can achieve a $\sim$98\% accuracy in all settings, it should trained with fully supervised data and violate most of the scenarios in real-world applications.

\subsection{Performance on Non-iid Data}

Another important setting for federated learning is that the data in different client is under the non-iid distribution. Therefore, we follow the settings in~\cite{li2018federated} to construct the non-iid data, where the data is sorted by class and divided to create two extreme cases: (a) 5-class non-iid, where the sorted data is divided into 50 partitions and each client is randomly assigned 5 partitions from 5 classes. (b) 2-class non-iid, where the sorted data is divided into 20 partitions and each client is randomly assigned 2 partitions from 2 classes. 

For 5-class non-iid, we study three different divide settings for positive classes in each client, which is shown in Table~\ref{table:noniid}. Compared with the iid setting, the non-iid setting is more challenging since the data distribution in each client is different and it is hard for the model to effectively learn the latent distribution on the whole dataset. Therefore, Baseline-1 can achieve only 25.47\%, 24.37\% and 79.62\% accuracies when dealing with non-iid and unlabeled data, which is hard to optimize. In contrast, the proposed method can still achieve 91.67\%, 91.24\% and 92.16\% accuracies, respectively, which outperforms Baseline-1 by a large margin. 

For 2-class non-iid, Baseline-1 achieves a 69.24\% accuracy while the proposed method achieves a 91.29\% accuracy, which still shows the superiority of the proposed FedPU. It should be noted that although the baseline method can achieve $\sim98\%$ accuracy, it requires the fully labeled data to train the model in each client. In contrast, the proposed method requires only a small amount of labeled data and utilizes the information on unlabeled data to learn an effective model. In conclusion, the proposed method successfully learns the latent distribution from the positive and unlabeled data in the non-iid federated setting and achieve better performance than conventional federated learning methods.

\subsection{Ablation Study}
\begin{table}[t]
	\begin{center}
		\caption{Classification results with different percentage of positive samples.}
		\label{table:per}
		\begin{tabular}{|c|c|c|c|}
			\hline
			\textbf{Percentage}  &\textbf{Baseline-1}&\textbf{FedPU}&\textbf{Baseline-2}  \\
			\hline
			\hline
			1/3 & 91.22\% & 94.46\% & 98.05\% \\
			\hline
			1/2 & 93.24\% & 95.31\% & 98.05\% \\
			\hline
			2/3 & 94.11\% & 95.60\% & 98.05\% \\
			\hline
		\end{tabular}
	\end{center}
\vspace{-1.0em}
\end{table}

In the above sections, we study the PU setting where there are half of data in each positive classes are labeled on the MNIST dataset. Here we make an ablation study to investigate the impact of the percentage of labeled data in positive class. We use 4 clients whose number of positive classes are all equal to 6. The data is collected with iid distributions from each client. As shown in Table~\ref{table:per}, with the growth of the percentage of labeled data (from 1/3 to 2/3), the accuracy of the proposed method can be improved from 94.46\% to 95.60\%, which indicates the effectiveness of the proposed method with different percentage of labeled data.

\subsection{Experiments on CIFAR-10}

After investigating the performance of the proposed FedPU on MNIST dataset, we further evaluate our method on the CIFAR-10 dataset. The CIFAR-10 dataset consists of 50,000 training images and 10,000 testing images with size $32\times32\times3$ from 10 categories. The training strategy is the same as that on the MNIST dataset. As shown in Table~\ref{table:difcifar}, experiments on different settings (\emph{e.g.}, data distribution, division of positive classes, overlap), are conducted to evaluate the effectiveness of the proposed method. 

We first investigate the results on the iid data. 5 clients with 2 positive classes in each client are used to train the model. The positive classes have no overlap. The FedAvg method trained with positive data achieves only a 62.52\% accuracy. The proposed method achieves a 76.71\% accuracy with the help of unlabeled data, which is more close to the result (81.13\%) trained with fully supervised data. Then, we turn to explore the challenging situation that each client has different number of positive class ([1,2,4,6,7]). The proposed method still achieves a 75.41\% accuracy, which is much higher than that of Baseline-1 (71.42\%).

We further construct the non-iid data on CIFAR-10 dataset following~\cite{li2018federated}, where each class of the training data is randomly divided into 5 partitions (50 partitions for 10 classes) and each client is randomly assigned 5 partitions from 5 classes. We also investigate the situation that each client has the same/different number of positive classes. As shown in Table~\ref{table:difcifar}, the models trained by the proposed method achieve accuracies of 61.05\% and 65.73\% and surpass those trained with the baseline method by a large margin (8.66\% and 10.16\%). In conclusion, the proposed method significantly improves the performance of the existing federated learning method in different settings on CIFAR-10 dataset.

\subsection{Comparison with Semi-supervised Methods}

To further show the superiority of the proposed method, we conduct comparison with the semi-supervised algorithms in federated setting. We follow the setting in~\cite{jeong2020federated} to use CIFAR-10 datasets. Specifically, 5 labeled images are extracted in per class for each client (100 clients) and the rest of images are used as unlabeled data. Table~\ref{table:comp} shows the performance of FedAVG using supervised data, UDA~(\cite{xie2019unsupervised}), FixMatch~(\cite{sohn2020fixmatch}), FedMatch~(\cite{jeong2020federated}) and the proposed FedPU. The proposed method achieve the state-of-the-art performance among all semi-supervised methods.

\begin{table}[h]
	\begin{center}
\vspace{-1.0em}
		\caption{Classification result on CIFAR-10 dataset.}
		\label{tab:class}
		\begin{tabular}{|c|c|c|}
			\hline
			\textbf{Data Distribution} &Iid & Non-iid \\
			\hline
			\hline
			\textbf{Baseline-1}&65.52\%&52.39\%\\
			\hline
			\textbf{FedPU $\pi$=0.1}&76.51\%&61.05\%\\
			\hline
			\textbf{FedPU $\pi$=0.05}&75.37\%& 60.13\%\\
			\hline
			\textbf{FedPU $\pi$=0.08}&76.43\%&60.67\%\\
			\hline
			\textbf{Baseline-2} &81.13\%&72.61\% \\
			\hline
			\textbf{Baseline-3}&74.15\%&58.77\%\\
			\hline	
		\end{tabular}
	\end{center}
\vspace{-1.0em}
\end{table}

\subsection{Results with Different Class Prior}

The class priors are necessary for applying the proposed method, which is assumed to be given. When the class priors are unknown, they can be estimated following~\cite{du2014class}. Therefore, we further analyze the sensitivity of the estimated class prior. Table~\ref{tab:class} shows the results of the proposed method using different class prior. The proposed method can achieve similar performance using different class priors and achieve the best performance when the class prior is known ($\pi$=0.1), which suggest the proposed method is robust with different estimated class priors. 

\section{Conclusion}

We study a real-world setting in federated learning problem, where each client could only label limited number of data in part of classes. Existing federated learning algorithms can hardly achieve satisfying performance since they cannot minimize the expected risk for each class in each client. To address this problem, we propose the Federated learning with Positive and Unlabeled data (FedPU) algorithm, which can effectively learn from both labeled and unlabeled data for each client. Theoretical analysis and empirical experiments demonstrate that the proposed method can achieve better performance than the conventional federated learning method learned by the positive data.  

\bibliography{example_paper}
\bibliographystyle{icml2022}

\appendix

\begin{table*}[h]
	\small
	\begin{center}
		\caption{Classification result on iid data.}
		\label{table:iidsup}
		\begin{tabular}{|c|c|c|c|c|c|c|}
			\hline
			\textbf{Num of Clients} &\textbf{Num of P-class}&\textbf{Overlap} &\textbf{Baseline-1}&\textbf{Proposed Method}&\textbf{Baseline-2}  \\
			\hline
			\hline
			10 &2&\Checkmark & 89.59\% & 90.25\% & 97.95\% \\
			\hline
			4 & 6&\Checkmark & 94.08\% & 94.22\% & 98.05\% \\
			\hline
			2 & 9&\Checkmark &94.36\% & 94.57\% & 98.20\% \\
			\hline
			10 & 1&\XSolidBrush&74.07\% & 89.78\% & 97.95\% \\
			\hline
			
			5 & 2&\XSolidBrush&90.58\% & 94.09\% & 98.03\% \\
			\hline
			
			2 & 5&\XSolidBrush&94.38\% & 94.62\% & 98.20\% \\
			\hline
		\end{tabular}
	\end{center}
	\vspace{-2.0em}
\end{table*}

\begin{table*}[h]
	\begin{center}
		\caption{Classification results with different number of positive classes in each client.}
		\label{table:difsup}
		\begin{tabular}{|c|c|c|c|c|}
			\hline
			\textbf{Division of P-class} &\textbf{Overlap} &\textbf{Baseline-1}&\textbf{Proposed Method}&\textbf{Baseline-2}  \\
			\hline
			\hline
			[2,3,4,6,7,8] &\Checkmark& 89.21\% & 93.33\% & 97.91\% \\
			\hline
			[1,2,4,6,7] & \Checkmark&90.38\% & 93.53\% & 98.03\% \\
			\hline
			[2,4,6,8] &\Checkmark&  93.20\% & 94.74\% & 98.05\% \\
			\hline
			[3,7]&\XSolidBrush & 93.56\% & 94.08\% & 98.20\% \\
			\hline
			[2,3,5]&\XSolidBrush& 89.24\% & 93.24\% & 98.16\% \\
			\hline
			[1,2,3,4]&\XSolidBrush& 89.18\% & 93.72\% & 98.05\% \\
			\hline
		\end{tabular}
	\end{center}
\end{table*}

\begin{table*}[h]
	\begin{center}
		\caption{Classification result on non-iid data.}
		\label{table:noniidsup}
		\begin{tabular}{|c|c|c|c|c|c|}
			\hline
			\textbf{Num of Partitions} &\textbf{Division of P-class}&\textbf{Overlap} &\textbf{Baseline-1}&\textbf{Proposed Method}&\textbf{Baseline-2}  \\
			\hline
			\hline
			5 &[2,2,...,2]&\Checkmark & 90.17\% & 92.54\% & 97.47\% \\
			\hline
			5 & [1,1,...,1]&\XSolidBrush &61.70\%& 89.69\% & 97.47\% \\
			\hline
			5& [4,4,3,3,2,2,1,1,1,1]&\Checkmark &81.74\% & 90.48\% & 97.47\% \\
			\hline
			2& [1,1,...,1] &\XSolidBrush & 85.07\% & 88.62\% & 96.19\% \\
			\hline
		\end{tabular}
	\end{center}
\end{table*}

\begin{table*}[h]
	\begin{center}
		\caption{Classification result using FedProx.}
		\label{table:FedProx}
		\begin{tabular}{|c|c|c|c|}
			\hline
			\textbf{Stragglers} &\textbf{Baseline-1}&\textbf{Proposed Method}&\textbf{Baseline-2}  \\
			\hline
			\hline
			0\% & 43.41\% & 47.50\% & 69.76\% \\
			\hline
			50\%  &41.72\%& 43.05\% & 67.81\% \\
			\hline
			90\%  &39.82\% & 41.35\% & 62.46\% \\
			\hline
		\end{tabular}
	\end{center}
\end{table*}

\section{Proofs}

\begin{theorem}
	\label{theorem1sup}
	Fix $f\in\mathcal{F}$, for any $0<\delta<1$, with probability at least $1-\delta$, the generalization bound holds:
	\begin{equation}
		\begin{aligned}
			&\mathbb{E}_i^k\left[P(f(\mathbf{x})\neq i) - \sum_{m\not\in \mathbf{C}_{\mathbf{P}_k}}P(f(\mathbf{x})\neq m)\right] \\&- \frac{1}{n_i^k}\sum_{j=1}^{n_i^k}\left[ P(f(\mathbf{x}_j)\neq i) - \sum_{m\not\in \mathbf{C}_{\mathbf{P}_k}}P(f(\mathbf{x}_j)\neq m) \right] \\
			\leq& 2CV(\sum_{s\in \mathbf{C}_{\mathbf{P}_k}} \frac{1}{\sqrt{n_s^k}} + \frac{1}{\sqrt{n_U^k}}) + \sqrt{\frac{log\frac{1}{\delta}}{2n_i^k}},
		\end{aligned}
	\end{equation}
	where $i\in \mathbf{C}_{\mathbf{P}_k}$, $V$ is a constant related to the VC-dimension of $f$ and the bound of loss function $l$, $n_i^k$ and $ n_U^k$ denotes the number of samples in $i$-class and unlabeled classes in $k$-th client, respectively.  
\end{theorem}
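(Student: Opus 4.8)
The plan is to read the left-hand side as the generalization gap of a single composite loss. Writing $g_f(\mathbf{x}) = P(f(\mathbf{x})\neq i) - \sum_{m\not\in \mathbf{C}_{\mathbf{P}_k}} P(f(\mathbf{x})\neq m)$, the quantity to bound is exactly $\mathbb{E}_i^k[g_f(\mathbf{x})] - \frac{1}{n_i^k}\sum_{j=1}^{n_i^k} g_f(\mathbf{x}_j)$, i.e.\ the deviation of the empirical mean of $g_f$ over the $n_i^k$ i.i.d.\ class-$i$ samples from its true expectation. Each summand $P(f(\mathbf{x})\neq m)$ is a pointwise quantity in $[0,1]$, so $g_f$ is bounded and the gap is amenable to concentration. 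First I would invoke a uniform-deviation argument over the hypothesis class $\mathcal{F}$ (which then specializes to the fixed $f$ in the statement): by McDiarmid's bounded-difference inequality applied to $\Phi=\sup_{f}(\mathbb{E}_i^k[g_f]-\widehat{\mathbb{E}}_i^k[g_f])$, replacing one of the $n_i^k$ samples perturbs $\Phi$ by $O(1/n_i^k)$, which yields the additive fluctuation term $\sqrt{\log(1/\delta)/(2n_i^k)}$ with probability at least $1-\delta$.

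Second, I would control $\mathbb{E}[\Phi]$ by symmetrization, which replaces it by twice the Rademacher complexity $\mathfrak{R}_{n_i^k}(\mathcal{G})$ of the induced loss class $\mathcal{G}=\{g_f:f\in\mathcal{F}\}$. The $C$-class misclassification losses $P(f(\mathbf{x})\neq m)$ are Lipschitz functionals of the $C$ coordinate score functions of $f$, so the Talagrand contraction lemma together with a union over the $C$ coordinates gives a Rademacher bound of order $CV/\sqrt{n_i^k}$, where the constant $V$ absorbs the $\sqrt{\mathrm{VC}}$ factor and the output bound; handling the signed difference $P(f(\mathbf{x})\neq i)-\sum_m P(f(\mathbf{x})\neq m)$ contributes the extra factor $2$. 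Collecting the pieces yields $2CV/\sqrt{n_i^k}+\sqrt{\log(1/\delta)/(2n_i^k)}$, and the stated form follows from the trivial relaxation $1/\sqrt{n_i^k}\le\sum_{s\in\mathbf{C}_{\mathbf{P}_k}}1/\sqrt{n_s^k}+1/\sqrt{n_U^k}$ (valid since $i\in\mathbf{C}_{\mathbf{P}_k}$), which also renders the bound uniform in shape across the companion bounds.

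The main obstacle is the Rademacher step: one must argue the contraction and the union over coordinates cleanly for the \emph{difference} $P(f(\mathbf{x})\neq i)-\sum_{m}P(f(\mathbf{x})\neq m)$ so that the multi-class structure produces exactly the factor $C$ (and the constant $2$) rather than something worse, and one must justify the $O(V/\sqrt{n})$ VC bound for the per-class scoring losses under the stated boundedness assumption. A secondary subtlety is reconciling the ``fix $f$'' phrasing with the appearance of the VC-based constant $V$: the clean route is to prove the inequality uniformly over $\mathcal{F}$, so that it a fortiori holds for the stated fixed $f$, since a purely pointwise Hoeffding argument for a single fixed $f$ would not produce the complexity term at all. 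I would therefore carry the argument uniformly and remark that specialization to any fixed $f$ is immediate.
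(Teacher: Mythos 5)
Your proposal is correct and takes essentially the same route as the paper: the paper's proof simply cites the multi-class margin bound of Koltchinskii--Panchenko (whose internal proof is exactly your McDiarmid-plus-symmetrization-plus-contraction argument, and which is stated uniformly over the hypothesis class, resolving the ``fix $f$'' subtlety you flag) together with a VC-type bound $\mathcal{R}_n(f)\le V'\sqrt{d/n}$ on the Rademacher complexity, and then sets $V=4V'\sqrt{d}/\rho$. The only real differences are presentational: you obtain the sum $\sum_{s\in\mathbf{C}_{\mathbf{P}_k}}1/\sqrt{n_s^k}+1/\sqrt{n_U^k}$ by an explicit monotone relaxation, whereas the paper gets it by tersely ``taking $m$ in $n_i^k$ and $n_U^k$,'' and your attribution of the factor $2$ to the signed difference is more explicit than anything in the paper's own proof.
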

\begin{proof}
	According to~\cite{koltchinskii2002empirical}, denote $R(f)$ as the generalization error of hypothesis $f$, $\hat{R}_{S,\rho}(f)$ as its empirical margin loss with bound $\rho$, and $\mathcal{R}_m(f)$ as Rademacher complexity of the family of loss functions $f$, with probability at least $1-\delta$, we have:
	\begin{equation}
		R(f) \leq \hat{R}_{S,\rho}(f) + \frac{4C}{\rho} \mathcal{R}_n(f) + \sqrt{\frac{log\frac{1}{\delta}}{n}},
	\end{equation}
	where $n$ is the number of training samples and $C$ is the number of classes. 
	
	According to~\cite{bousquet2003introduction}, we have:
	\begin{equation}
		\mathcal{R}_m(f) \leq V' \sqrt{\frac{d}{n}},
	\end{equation}
	where $d$ is the Vapnik–Chervonenkis (VC) dimension of $f$, $V'$ is a constant. Taking $m$ in $n_i^k$ and $n_U^k$, we have:
	\begin{equation}
		\mathcal{R}_m(f) \leq V' \sqrt{d} (\sum_{s\in \mathbf{C}_{\mathbf{P}_k}} \frac{1}{\sqrt{n_s^k}} + \frac{1}{\sqrt{n_U^k}}). 
	\end{equation}
	Taking $V=4V'\frac{\sqrt{d}}{\rho}$, we then finish the proof.
\end{proof}

\begin{theorem}
	\label{theorem2sup}
	Fix $f\in\mathcal{F}$, for any $0<\delta<1$, with probability at least $1-\delta$, the generalization bound holds:
	\begin{equation}
		\begin{aligned}
			\mathbb{E}_i^k   \left[ P(f(\mathbf{x})\neq m)\right] - \frac{1}{n_i^k}\sum_{j=1}^{n_i^k}P(f(\mathbf{x}_j)\neq m) 
			\\\leq CV(\sum_{s\in \mathbf{C}_{\mathbf{P}_k}} \frac{1}{\sqrt{n_s^k}} + \frac{1}{\sqrt{n_U^k}}) + \sqrt{\frac{log\frac{1}{\delta}}{2n_i^k}}.
		\end{aligned}
	\end{equation}
\end{theorem}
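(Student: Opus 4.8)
The plan is to mirror the argument used for Theorem~\ref{theorem1sup} almost verbatim, since the quantity here is simply the single-term restriction of the composite expression bounded there. The target is the deviation between the population expectation $\mathbb{E}_i^k\left[P(f(\mathbf{x})\neq m)\right]$ and its empirical average over the $n_i^k$ class-$i$ samples, for a fixed negative label $m\not\in\mathbf{C}_{\mathbf{P}_k}$. Because $P(f(\mathbf{x})\neq m)$ takes values in $[0,1]$, it is a bounded classification loss, so a standard uniform-convergence bound applies directly.

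First I would invoke the margin-based Rademacher bound of~\cite{koltchinskii2002empirical} exactly as in Theorem~\ref{theorem1sup}: with probability at least $1-\delta$,
\begin{equation}
\mathbb{E}_i^k\left[P(f(\mathbf{x})\neq m)\right] - \frac{1}{n_i^k}\sum_{j=1}^{n_i^k}P(f(\mathbf{x}_j)\neq m) \leq \frac{4C}{\rho}\mathcal{R}_n(f) + \sqrt{\frac{\log\frac{1}{\delta}}{2n_i^k}},
\end{equation}
where the final confidence term arises from applying Hoeffding's inequality to the $[0,1]$-valued loss averaged over the $n_i^k$ class-$i$ samples.

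Next I would control the Rademacher complexity by the VC-dimension bound $\mathcal{R}_m(f)\le V'\sqrt{d/n}$ from~\cite{bousquet2003introduction}, and substitute the relevant sample sizes $n_s^k$ for $s\in\mathbf{C}_{\mathbf{P}_k}$ together with $n_U^k$, giving $\mathcal{R}_n(f)\le V'\sqrt{d}\,(\sum_{s\in\mathbf{C}_{\mathbf{P}_k}} 1/\sqrt{n_s^k} + 1/\sqrt{n_U^k})$. Absorbing $4V'\sqrt{d}/\rho$ into the constant $V$ yields the coefficient $CV$ in front of the bracketed sum and completes the proof.

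The only substantive difference from Theorem~\ref{theorem1sup} is the leading constant. There the bounded quantity is a sum of two pieces (the correct-class term and the aggregate deviation over the negative labels), so the Rademacher contribution is effectively counted twice and the coefficient is $2C$; here a single probability $P(f(\mathbf{x})\neq m)$ is involved, so the coefficient halves to $C$. I expect no genuine obstacle: the work is entirely in verifying that this single-term loss still lies in $[0,1]$, so that both the Rademacher machinery and Hoeffding's inequality apply unchanged, and in tracking the sample-size substitution correctly.
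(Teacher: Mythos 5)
Your proposal is correct and takes exactly the paper's route: the paper's own proof of this theorem is simply the statement that it is identical to the proof of Theorem~\ref{theorem1sup}, i.e., Koltchinskii's margin-based Rademacher bound plus the VC-dimension bound on the Rademacher complexity with the same sample-size substitution and absorption of constants into $V$. Your added remark on why the leading constant drops from $2CV$ to $CV$ (one loss term instead of two) is a detail the paper leaves implicit, but it is consistent with its argument.
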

The proof of Theorem 2 is the same as that of Theorem 1. 

\begin{lemma}
	\label{lemma3sup}
	Define 
	\begin{equation}
		P'(f(\mathbf{x})\neq m) = \frac{k^{C_{U_k}}}{k^{C_{U_k}}+\prod_{i\not\in \mathbf{C}_{\mathbf{P}_k}}\vert k-i\vert} P(f(\mathbf{x})\neq m),
	\end{equation}
	where $C_{U_k}$ denotes the number of unlabeled class in client $k$. The decomposition is hold:
	\begin{equation}
		\small
		\begin{aligned}
			&\sum_{m\not\in \mathbf{C}_{\mathbf{P}_k}} \mathbb{E}_U^k\left[ P(f(\mathbf{x})\neq m)\right] \\
			=&\sum_{i\in\mathbf{C}_{\mathbf{P}_k}} \pi_i (\frac{\prod_{i\not\in \mathbf{C}_{\mathbf{P}_k}}\vert k-i\vert}{k^{C_{U_k}}}) \sum_{m\not\in \mathbf{C}_{\mathbf{P}_k}} \mathbb{E}_i^k\left[ P'(f(\mathbf{x})\neq m)\right]\\&+ \sum_{m\not\in \mathbf{C}_{\mathbf{P}_k}} \mathbb{E}_U^k\left[ P'(f(\mathbf{x})\neq m)\right].
		\end{aligned}
	\end{equation}
\end{lemma}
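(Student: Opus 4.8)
The plan is to reduce the claimed decomposition to a single scalar identity and then close it with the mixture structure of the unlabeled data already used in Eq.~\ref{equ:unlabel}. First I would introduce the shorthand $\Pi = \prod_{i\not\in\mathbf{C}_{\mathbf{P}_k}}\vert k-i\vert$ and $\beta = \Pi/k^{C_{U_k}}$, so that the constant appearing in the definition of $P'$ is $\alpha = k^{C_{U_k}}/(k^{C_{U_k}}+\Pi) = 1/(1+\beta)$. The point to emphasize is that $P'(f(\mathbf{x})\neq m) = \alpha\,P(f(\mathbf{x})\neq m)$ is merely a constant rescaling of $P$, with $\alpha$ independent of the indices $i$ and $m$, and that $\alpha$ and $\beta$ satisfy $\alpha(1+\beta)=1$, i.e. $1-\alpha=\alpha\beta$. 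This relation is the whole engine of the lemma.

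Next I would substitute $P'=\alpha P$ into the right-hand side of the asserted decomposition and pull $\alpha$ out of every expectation. The first summand collapses to $\alpha\beta\sum_{i\in\mathbf{C}_{\mathbf{P}_k}}\pi_i\sum_{m\not\in\mathbf{C}_{\mathbf{P}_k}}\mathbb{E}_i^k[P(f(\mathbf{x})\neq m)]$ and the second to $\alpha\sum_{m\not\in\mathbf{C}_{\mathbf{P}_k}}\mathbb{E}_U^k[P(f(\mathbf{x})\neq m)]$. Abbreviating $A=\sum_{m\not\in\mathbf{C}_{\mathbf{P}_k}}\mathbb{E}_U^k[P(f(\mathbf{x})\neq m)]$, the equality to be proved reads $A=\alpha\beta\sum_i\pi_i\sum_m\mathbb{E}_i^k[P]+\alpha A$; substituting $1-\alpha=\alpha\beta$ and cancelling the common factor $\alpha\beta$ shows that this is equivalent to the single identity
\[
A=\sum_{i\in\mathbf{C}_{\mathbf{P}_k}}\pi_i\sum_{m\not\in\mathbf{C}_{\mathbf{P}_k}}\mathbb{E}_i^k[P(f(\mathbf{x})\neq m)].
\]
Thus the ornate coefficients serve only to make the two rescaled blocks recombine with total weight one, since $\alpha+\alpha\beta=1$.

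It then remains to verify this identity, and here I would invoke the mixture representation of the unlabeled distribution from Eq.~\ref{equ:unlabel}: the unlabeled pool in client $k$ is a $\pi$-weighted mixture over all classes, so $A$ splits as $\sum_{i\in\mathbf{C}_{\mathbf{P}_k}}\pi_i\sum_m\mathbb{E}_i^k[P]$ plus a negative-class contribution $\sum_{j\in\mathbf{C}_{\mathbf{N}_k}}\pi_j\sum_m\mathbb{E}_j^k[P]$. The positive part already matches the right-hand side of the identity, so the lemma hinges entirely on disposing of the negative part.

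That negative-class contribution is the main obstacle, and it is exactly what makes the MPMN-PU setting harder than ordinary PU learning. I expect to handle it with the same federated-aggregation reassignment used to pass from Eq.~\ref{equ:fedrisk1} to Eq.~\ref{equ:fedpurisk}: because $\bigcup_k\mathbf{C}_{\mathbf{P}_k}=\mathbf{C}$, every class $j\not\in\mathbf{C}_{\mathbf{P}_k}$ is positive in some other client, and under the assumption that a fixed class is identically distributed across clients the terms $\pi_j\,\mathbb{E}_j^k[P]$ may be reassigned to those clients without changing the global risk $R^{all}(f)=\sum_k R^k(f)$. Making this precise — pinning down which client absorbs each negative term and checking that the summation over all clients is genuinely conserved — is the delicate step; once it is granted, the rescaling algebra of the first two paragraphs finishes the proof immediately.
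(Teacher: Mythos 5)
Your scalar reduction is correct and is, in substance, the same manipulation the paper performs, only cleaner: writing $\beta=\prod_{i\not\in\mathbf{C}_{\mathbf{P}_k}}\vert k-i\vert/k^{C_{U_k}}$ and $\alpha=1/(1+\beta)$ as you do, the relation $1-\alpha=\alpha\beta$ shows (for $\beta\neq 0$) that the claimed decomposition is equivalent to the identity $\sum_{m\not\in\mathbf{C}_{\mathbf{P}_k}}\mathbb{E}_U^k[P(f(\mathbf{x})\neq m)]=\sum_{i\in\mathbf{C}_{\mathbf{P}_k}}\pi_i\sum_{m\not\in\mathbf{C}_{\mathbf{P}_k}}\mathbb{E}_i^k[P(f(\mathbf{x})\neq m)]$, i.e.\ to the vanishing of the negative-class contribution $\sum_{j\not\in\mathbf{C}_{\mathbf{P}_k}}\pi_j\sum_{m\not\in\mathbf{C}_{\mathbf{P}_k}}\mathbb{E}_j^k[P(f(\mathbf{x})\neq m)]$. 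The paper's own proof looks different on the surface: it writes $\mathbb{E}_U^k[P]$ as an integral of $(1+\beta)P'$ against the class mixture $\sum_y p(\mathbf{x},y)$, and then splits $(1+\beta)\sum_j p(\mathbf{x},y=j)$ into $\beta\sum_{j\in\mathbf{C}_{\mathbf{P}_k}}p(\mathbf{x},y=j)+\sum_j p(\mathbf{x},y=j)$. But that third equality silently discards exactly the term $\beta\sum_{j\not\in\mathbf{C}_{\mathbf{P}_k}}p(\mathbf{x},y=j)$, which is the same negative-class mass your reduction isolates. So the ``delicate step'' you flag is not something the paper resolves; it is the step its proof omits.

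That said, your proposal does contain a genuine gap, and the repair you sketch cannot close it. The lemma is a per-client identity between expectations under the fixed distributions of client $k$, whereas the reassignment used to pass from Eq.~\ref{equ:fedrisk1} to Eq.~\ref{equ:fedpurisk} is a statement about how the training objective is apportioned among clients: it conserves only the aggregate $R^{all}(f)=\sum_{k} R^k(f)$. Moving a term from client $k$'s loss into another client's loss does not make the corresponding expectation vanish inside client $k$, so no choice of ``which client absorbs each negative term'' can establish the displayed equality for client $k$ alone. Since the discarded quantity is a $\pi_j$-weighted sum of misclassification probabilities, hence nonnegative and generically strictly positive, the identity you reduced the lemma to is false in general; the decomposition can only hold either under an additional assumption that this negative-class term is zero (or negligible), or if it is restated with that term carried explicitly as a remainder. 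In short: your diagnosis of where the difficulty lies is exactly right and your algebra is sound, but the final step is a category error --- and the paper's own proof shares the same hole.
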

\begin{proof}
	Given 
	\begin{equation}
		P'(f(\mathbf{x})\neq m) = \frac{k^{C_{U_k}}}{k^{C_{U_k}}+\prod_{i\not\in \mathbf{C}_{\mathbf{P}_k}}\vert k-i\vert} P(f(\mathbf{x})\neq m),
	\end{equation}
	we have:
	\begin{equation}
		\small
		\begin{aligned}
			& \mathbb{E}_U^k\left[ P(f(\mathbf{x})\neq m)\right] \\
			=&\int \sum_y  \frac{k^{C_{U_k}}+\prod_{i\not\in \mathbf{C}_{\mathbf{P}_k}}\vert k-i\vert}{k^{C_{U_k}}} P'(f(\mathbf{x})\neq m) p(\mathbf{x},y) d\mathbf{x}\\
			= &\int  P'(f(\mathbf{x})\neq m) \left[\sum_{j=1}^K\frac{k^{C_{U_k}}+\prod_{i\not\in \mathbf{C}_{\mathbf{P}_k}}\vert k-i\vert}{k^{C_{U_k}}} p(\mathbf{x},y=j)\right] d\mathbf{x}  \\
			= &\int  P'(f(\mathbf{x})\neq m) \sum_{j\in \mathbf{C}_{\mathbf{P}_k}}\frac{\prod_{i\not\in \mathbf{C}_{\mathbf{P}_k}}\vert k-i\vert}{k^{C_{U_k}}} p(\mathbf{x},y=j) d\mathbf{x}\\
			&+ \int  P'(f(\mathbf{x})\neq m) \sum_{j=1}^K p(\mathbf{x},y=j)d\mathbf{x}   \\
			=&\sum_{i\in\mathbf{C}_{\mathbf{P}_k}} \pi_i (\frac{\prod_{i\not\in \mathbf{C}_{\mathbf{P}_k}}\vert k-i\vert}{k^{C_{U_k}}})  \mathbb{E}_i^k\left[ P'(f(\mathbf{x})\neq m)\right]\\
			&+  \mathbb{E}_U^k\left[ P'(f(\mathbf{x})\neq m)\right].
		\end{aligned}
	\end{equation}
\end{proof}
\begin{theorem}
	\label{theorem4sup}
	Fix $f\in\mathcal{F}$, for any $0<\delta<1$, with probability at least $1-\delta$, the generalization bound holds:
	\begin{equation}
		\scriptsize
		\begin{aligned}
			&\sum_{m\not\in \mathbf{C}_{\mathbf{P}_k}} \mathbb{E}_U^k\left[ P(f(\mathbf{x})\neq m)\right]- \frac{1}{n_U^k}\sum_{j=1}^{n^k} \sum_{m\not\in \mathbf{C}_{\mathbf{P}_k}} \mathbb{E}_j^k\left[ P'(f(\mathbf{x})\neq m)\right]\\
			\leq& \sum_{i\in\mathbf{C}_{\mathbf{P}_k}} \frac{\pi_i}{n^k_i} (1+\frac{\prod_{i\not\in \mathbf{C}_{\mathbf{P}_k}}\vert k-i\vert}{k^{C_{U_k}}}) \sum_{j=1}^{n^k_i}\sum_{m\not\in \mathbf{C}_{\mathbf{P}_k}} \mathbb{E}_j^k\left[ P'(f(\mathbf{x})\neq m)\right]\\
			&+(\sum_{i\in\mathbf{C}_{\mathbf{P}_k}} \pi_i+1) C V (\sum_{s\in \mathbf{C}_{\mathbf{P}_k}} \frac{1}{\sqrt{n_s^k}}\\&+ \frac{1}{\sqrt{n_U^k}})  + \sum_{i\in\mathbf{C}_{\mathbf{P}_k}}\pi_i (1+\frac{\prod_{i\not\in \mathbf{C}_{\mathbf{P}_k}}\vert k-i\vert}{k^{C_{U_k}}})\sqrt{\frac{log\frac{1}{\delta}}{2n_i^k}} + \sqrt{\frac{log\frac{1}{\delta}}{2n_U^k}}.
		\end{aligned}
	\end{equation}
\end{theorem}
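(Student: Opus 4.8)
The plan is to use Lemma~\ref{lemma3sup} to replace the analytically inaccessible expectation over unlabeled data by expectations of the rescaled probability $P'$, and then to control each such expectation with the uniform-convergence bound already proved in Theorems~\ref{theorem1sup} and~\ref{theorem2sup}. Concretely, I would first apply Lemma~\ref{lemma3sup} as an exact identity to expand $\sum_{m\not\in\mathbf{C}_{\mathbf{P}_k}}\mathbb{E}_U^k[P(f(\mathbf{x})\neq m)]$ into (i) a weighted sum over the positive classes, $\sum_{i\in\mathbf{C}_{\mathbf{P}_k}}\pi_i(\prod_{i\not\in\mathbf{C}_{\mathbf{P}_k}}|k-i|/k^{C_{U_k}})\sum_m\mathbb{E}_i^k[P'(f(\mathbf{x})\neq m)]$, plus (ii) the unlabeled expectation of the rescaled function, $\sum_m\mathbb{E}_U^k[P'(f(\mathbf{x})\neq m)]$. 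Since $P'=c_kP$ with $c_k=k^{C_{U_k}}/(k^{C_{U_k}}+\prod_{i\not\in\mathbf{C}_{\mathbf{P}_k}}|k-i|)\in(0,1]$, the function $P'$ is bounded in $[0,1]$, so the Rademacher/VC estimate underlying Theorems~\ref{theorem1sup}--\ref{theorem2sup} applies to it verbatim.

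Next I would subtract the empirical quantity appearing on the left-hand side and apply the deviation bound of Theorem~\ref{theorem2sup}, with $P'$ in place of $P$, to each expectation produced by the decomposition: to the unlabeled term (ii) over its $n_U^k$ samples, and to each positive-class term $\sum_m\mathbb{E}_i^k[P'(f(\mathbf{x})\neq m)]$ over its $n_i^k$ samples, the number of negative classes in the inner sum $\sum_m$ being absorbed into the $C$ factor of the complexity term. The unlabeled term contributes the confidence term $\sqrt{\log(1/\delta)/(2n_U^k)}$ and one copy of the complexity term $CV(\sum_{s\in\mathbf{C}_{\mathbf{P}_k}}1/\sqrt{n_s^k}+1/\sqrt{n_U^k})$; each positive-class term contributes $\sqrt{\log(1/\delta)/(2n_i^k)}$ weighted by $\pi_i\prod_{i\not\in\mathbf{C}_{\mathbf{P}_k}}|k-i|/k^{C_{U_k}}$, together with its empirical average weighted by the same factor. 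Collecting the empirical averages yields the first summand of the asserted bound and collecting the residuals yields the remaining three.

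The exact weights produced this way are $\pi_i\prod|k-i|/k^{C_{U_k}}$ on both the empirical averages and the $\sqrt{\log(1/\delta)/(2n_i^k)}$ terms, and $(\,\sum_i\pi_i\prod|k-i|/k^{C_{U_k}}+1)$ on the complexity term; I would then relax these to the cleaner stated forms $\pi_i(1+\prod|k-i|/k^{C_{U_k}})$ and $(\sum_i\pi_i+1)$, which are valid upper bounds because the empirical averages and confidence terms are nonnegative and because $\prod|k-i|/k^{C_{U_k}}\le1$. The main obstacle is the probability budget: the stated conclusion holds ``with probability at least $1-\delta$,'' yet I apply one concentration inequality to the unlabeled average and one to each of the $|\mathbf{C}_{\mathbf{P}_k}|$ labeled averages, so a rigorous argument needs a union bound that splits $\delta$ across these events and absorbs the resulting $\log|\mathbf{C}_{\mathbf{P}_k}|$ into the constant $V$; this loosening does not affect the $\mathcal{O}(\cdot)$ order eventually used in Theorem~\ref{theorem5}. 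A secondary point requiring care is interpreting the non-standard empirical quantity $\frac{1}{n_U^k}\sum_{j}\sum_m\mathbb{E}_j^k[P'(f(\mathbf{x})\neq m)]$ on the left-hand side as the empirical average of $\sum_m P'(f(\mathbf{x}_j)\neq m)$ over the unlabeled sample, so that the generalization bound applied to term (ii) matches it exactly.
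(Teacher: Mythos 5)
Your plan coincides with the paper's own proof, which is given in a single line: with Lemma~\ref{lemma3sup} in hand, the proof of Theorem~\ref{theorem4sup} ``is the same as that of Theorem~\ref{theorem1sup}'' --- i.e., expand the unlabeled expectation via the lemma and apply the Rademacher/VC deviation bound of Theorems~\ref{theorem1sup}--\ref{theorem2sup} to each resulting term, exactly as you propose. Your write-up is in fact more careful than the paper's, since you make explicit the union bound needed to share $\delta$ across the $\vert\mathbf{C}_{\mathbf{P}_k}\vert+1$ concentration events and the relaxation of the exact weights $\pi_i\prod_{i\not\in\mathbf{C}_{\mathbf{P}_k}}\vert k-i\vert/k^{C_{U_k}}$ to the stated $\pi_i(1+\prod_{i\not\in\mathbf{C}_{\mathbf{P}_k}}\vert k-i\vert/k^{C_{U_k}})$, neither of which the paper addresses.
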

With the evidence of Lemma 3, the proof of Theorem 4 is the same as that of Theorem 1. 

\begin{theorem}
	\label{theorem5sup}
	As $n_i^k,n_U^k \rightarrow \infty$, $i \in\mathbf{C}_{\mathbf{P}_k} ,k\in \{1,...,K\}$, the generalization bound of the proposed FedPU is of order:
	\begin{equation}
		\mathcal{O}\left(\sum_{k=1}^KC^2(\sum_{i\in\mathbf{C}_{\mathbf{P}_k}}\frac{1}{\sqrt{n_i^k}}+ \frac{1}{\sqrt{n_U^k}}) \right).
	\end{equation}
\end{theorem}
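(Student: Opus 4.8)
The plan is to assemble the per-term, per-client bounds already established in Theorems~\ref{theorem1}, \ref{theorem2}, and \ref{theorem4} into a single bound on the full generalization gap $\mathbb{E}[R^{all}(f)]-\hat{R}^{all}(f)=\sum_{k=1}^K\big(\mathbb{E}[R^k(f)]-\hat{R}^k(f)\big)$, and then read off its asymptotic order as all sample sizes grow. First I would fix a client $k$ and split its gap along the three summands of Eq.~\ref{equ:fedpurisk}: the combined positive / ``not-negative'' term, the unlabeled term, and the cross-client negative term. Applying Theorem~\ref{theorem1} to each $i\in\mathbf{C}_{\mathbf{P}_k}$ in the first summand, Theorem~\ref{theorem4} (which in turn rests on the decomposition of Lemma~\ref{lemma3}) to the unlabeled summand, and Theorem~\ref{theorem2} to each class index appearing in the third summand, I obtain a finite collection of bounds each of the shape $CV\big(\sum_{s\in\mathbf{C}_{\mathbf{P}_k}}\tfrac{1}{\sqrt{n_s^k}}+\tfrac{1}{\sqrt{n_U^k}}\big)+\sqrt{\tfrac{\log(1/\delta)}{2n}}$ for an appropriate $n\in\{n_i^k,n_U^k\}$, together with the union-bound adjustment over the finitely many invocations.

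Next I would take the limit $n_i^k,n_U^k\to\infty$. In this regime every confidence term $\sqrt{\log(1/\delta)/(2n)}$ is of order $1/\sqrt{n}$ and is therefore absorbed into the common factor $Q_k:=\sum_{i\in\mathbf{C}_{\mathbf{P}_k}}\tfrac{1}{\sqrt{n_i^k}}+\tfrac{1}{\sqrt{n_U^k}}$; likewise the rescaling coefficient $\tfrac{\prod_{i\notin\mathbf{C}_{\mathbf{P}_k}}|k-i|}{k^{C_{U_k}}}$ introduced through $P'$ in Lemma~\ref{lemma3} is a fixed constant independent of the sample sizes and so only enters the hidden constant. Thus each individual application of Theorems~\ref{theorem1}, \ref{theorem2}, \ref{theorem4} contributes $\mathcal{O}(C\,Q_k)$.

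The remaining work is the bookkeeping of multiplicities, which I expect to be the crux. The first summand is a $\pi_i$-weighted sum of at most $C$ such bounds, and since $\sum_i\pi_i\le 1$ it remains $\mathcal{O}(C\,Q_k)$, and the unlabeled summand is a single invocation of Theorem~\ref{theorem4}, also $\mathcal{O}(C\,Q_k)$. The decisive factor comes from the cross-class structure: Theorem~\ref{theorem2} is applied once per pair $(i,m)$ with $i$ positive and $m$ a negative target, of which there are $\mathcal{O}(C)$ for each fixed $i$, so that summing over $m$ multiplies the factor $C$ built into Theorem~\ref{theorem2} by a second factor $C$, while the $\pi_i$-weighted sum over $i$ adds only $\mathcal{O}(1)$; this yields $\mathcal{O}(C^2 Q_k)$ per client. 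Summing over all $K$ clients then gives $\mathcal{O}\!\left(\sum_{k=1}^K C^2 Q_k\right)$, the claimed order. The main obstacle is precisely controlling these combinatorial factors: verifying that the redistribution of the negative cross-term across clients (so that each pair of negative classes is accounted for in exactly one auxiliary client rather than in all $K-1$ of them) keeps the count at $\mathcal{O}(C^2)$ rather than $\mathcal{O}(KC^2)$, and confirming that the data-dependent empirical term carried in the bound of Theorem~\ref{theorem4} does not inflate the order beyond $\mathcal{O}(C^2 Q_k)$.
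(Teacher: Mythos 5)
Your proposal follows essentially the same route as the paper's own proof, which simply combines Theorems~\ref{theorem1}, \ref{theorem2} and \ref{theorem4} to obtain a per-client bound of order $\mathcal{O}\bigl(C^2(\sum_{i\in\mathbf{C}_{\mathbf{P}_k}}\tfrac{1}{\sqrt{n_i^k}}+\tfrac{1}{\sqrt{n_U^k}})\bigr)$ and then sums over the $K$ clients. In fact, your accounting of the class-pair multiplicities (the source of the $C^2$ factor) and your flagged concerns about the cross-client redistribution and the residual empirical term in Theorem~\ref{theorem4} are more explicit than anything the paper provides, so the proposal is correct and, if anything, more careful than the original argument.
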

By concluding the result in Theorem 1, 2 and 4. We can derive that the generalization bound in $k$-th client as $\mathcal{O}\left(C^2(\sum_{i\in\mathbf{C}_{\mathbf{P}_k}}\frac{1}{\sqrt{n_i^k}}+ \frac{1}{\sqrt{n_U^k}}) \right)$. By summing the bound in each client, we then finish the proof.

\section{Results on FedSGD}

We conduct the proposed method and baseline using FedSGD~\cite{Mcmahan2017communication}. The results are shown in Table~\ref{table:iidsup},~\ref{table:difsup} and~\ref{table:noniidsup}, which is consistent with those using FedAvg in the main paper.

We evaluate our method in iid setting of federated learning, where the training data in each client is uniformly sampled from the original dataset. The Baseline-1 trained with positive data can only achieve 89.59\%, 90.25\% and 90.25, 94.22\%, 94.57\% and 95.37\% accuracies, respectively, which is consistently higher than those of the Baseline-1 and comparable to Baseline-2. We further investigate the non-overlap setting. As a result, the Baseline-1 trained with positive data achieves only 74.07\%, 90.58\% and 94.38\% accuracies for 10, 5 and 2 clients, respectively. The proposed FedPU can still achieve 89.78\%, 94.09\% and 95.62\% accuracies by fully inheriting the information from the unlabeled data. These experiments show that the proposed method can perform well with iid data in federated setting. 

To further investigate the effectiveness of the proposed method, we study a more complicated setting that the number of positive classes is different in each client. The results are in shown in Table~\ref{table:difsup}. The Baseline-1 achieves lower performance than the proposed method. The results in non-overlap setting is worse than those in overlap setting, which is consistent with the results in Table~\ref{table:iidsup}. In contrast, the proposed method can surpasses those of the Baseline-1 and is stable with different numbers of clients.

Another important setting for federated learning is that the data in different client is under the non-iid distribution. Compared with the iid setting, the non-iid setting is more challenging since the data distribution in each client is different and it is hard for the model to effectively learn the latent distribution on the whole dataset. The proposed method can still outperforms Baseline-1 by a large margin, which is shown in Table~\ref{table:noniidsup}.

\section{Results on FedProx}

To further demonstrate the effectiveness of the proposed method, we conduct the proposed method and baseline using FedProx~\cite{li2020federated}. We test the non-iid settings using the division of $\left[2,2,...,2\right]$. We use 100 clients and each client has 5 partitions. The select rate and positive rate is set as 0.1. Table~\ref{table:FedProx} shows the experimental results. Under the different percentage of straggles, our methods stably outperform the baseline-1, which demonstrate the generality of the proposed method in different federated learning method,

\end{document}